\providecommand{\dif}{\mathop{}\!\mathrm d}
\providecommand{\Ex}{\mathbb E}
\providecommand{\hide}[1]{}
\theoremstyle{plain}
\newtheorem{thm}{Theorem}[section]
\newtheorem{lem}[thm]{Lemma}
\theoremstyle{definition}
\newcommand{\vct}[1]{\boldsymbol{#1}} %
\newcommand{\mat}[1]{\boldsymbol{#1}} %
\newcommand{\field}[1]{\mathbb{#1}}
\newcommand{\R}{\field{R}} %
\newcommand{\T}{^{\textrm T}} %
\newcommand{\ProbOpr}[1]{\mathbb{#1}}
\newcommand{\expect}[2]{%
\ifthenelse{\equal{#2}{}}{\ProbOpr{E}_{#1}}
{\ifthenelse{\equal{#1}{}}{\ProbOpr{E}\left[#2\right]}{\ProbOpr{E}_{#1}\left[#2\right]}}} %
\newcommand{\var}[2]{%
\ifthenelse{\equal{#2}{}}{\ProbOpr{VAR}_{#1}}
{\ifthenelse{\equal{#1}{}}{\ProbOpr{VAR}\left[#2\right]}{\ProbOpr{VAR}_{#1}\left[#2\right]}}} %
\DeclareMathOperator{\argmax}{arg\,max}
\newcommand{\vphi}{\vct{\phi}}
\newcommand{\vpp}{\vct{\phi}}
\newcommand{\vx}{{\vct{x}}}
\newcommand{\vnu}{{\vct{\nu}}}
\newcommand{\vy}{\vct{y}}
\newcommand{\vu}{\vct{u}}
\newcommand{\va}{\vct{a}}
\newcommand{\vr}{\vct{r}}
\newcommand{\vf}{\vct{f}}
\newcommand{\vk}{\vct{k}}
\newcommand{\kk}{\kappa}
\newcommand{\vv}{\vct{v}}
\newcommand{\ww}{\omega}
\newcommand{\mI}{\mat{I}}
\newcommand{\mK}{\mat{K}}
\newcommand{\mSigma}{\mat{\Sigma}}
\newcommand{\rt}{\tilde{r}} %
\newcommand{\bt}{\nu}
\newcommand{\sj}[1]{\textcolor{blue}{SJ: #1}}
\icmltitlerunning{Max-value Entropy Search for Efficient Bayesian Optimization}
\begin{document}
\twocolumn[
\icmltitle{Max-value Entropy Search for Efficient Bayesian Optimization}
\icmlsetsymbol{equal}{*}

\begin{icmlauthorlist}
\icmlauthor{Zi Wang}{mit}
\icmlauthor{Stefanie Jegelka}{mit}
\end{icmlauthorlist}

\icmlaffiliation{mit}{Computer Science and Artificial Intelligence Laboratory, Massachusetts Institute of Technology, Massachusetts, USA}

\icmlcorrespondingauthor{Zi Wang}{ziw@csail.mit.edu}
\icmlcorrespondingauthor{Stefanie Jegelka}{stefje@csail.mit.edu}
\icmlkeywords{Bayesian optimization, Gaussian process, Entropy search}

\vskip 0.3in
]

\printAffiliationsAndNotice{}  %

\begin{abstract} 
Entropy Search (ES) and Predictive Entropy Search (PES) are popular and empirically successful Bayesian Optimization techniques. Both rely on a compelling information-theoretic motivation, and maximize the information gained about the $\arg\max$ of the unknown function; yet, both are plagued by the expensive computation  for estimating entropies. We propose a new criterion, Max-value Entropy Search (MES), that instead uses the information about the maximum function value. We show relations of MES to other Bayesian optimization methods, and establish a regret bound. We observe that MES maintains or improves the good empirical performance of ES/PES, while tremendously lightening the computational burden. In particular, MES is much more robust to the number of samples used for computing the entropy, and hence more efficient for higher dimensional problems. %

\end{abstract}

\section{Introduction}

Bayesian optimization (BO) has become a popular and effective way for black-box optimization of nonconvex, expensive functions in robotics, machine learning, computer vision, and many other areas of science and engineering \citep{brochu2009,calandra2014experimental,krause2011contextual,lizotte2007,snoek2012practical,thornton13,wang17icra}. In BO, a prior is posed on the (unknown) objective function, and the uncertainty given by the associated posterior is the basis for an acquisition function that guides the selection of the next point to query the function. The selection of queries and hence the acquisition function is critical for the success of the method.

Different BO techniques differ in this acquisition function. Among the most popular ones range the Gaussian process upper confidence bound (GP-UCB)~\cite{auer2002b,srinivas2009gaussian}, probability of improvement (PI)~\cite{kushner1964}, and expected improvement (EI)~\cite{mockus1974}. Particularly successful recent additions are entropy search (ES)~\cite{hennig2012} and predictive entropy search (PES) \cite{hernandez2014predictive}, which aim to maximize the mutual information between the queried points and the location of the global optimum.

ES and PES are effective in the sense that they are query-efficient and identify a good point within competitively few iterations, but determining the next query point involves very expensive computations. As a result, these methods are most useful if the black-box function requires a lot of effort to evaluate, and are relatively slow otherwise. Moreover, they rely on estimating the entropy of the $\argmax$ of the function. In high dimensions, this estimation demands a large number of samples from the input space, which can quickly become inefficient.

We propose a twist to the viewpoint of ES and PES that retains the information-theoretic motivation and empirically successful query-efficiency of those methods, but at a much reduced computational cost. The key insight is to replace the uncertainty about the $\argmax$ with the uncertainty about the maximum function value. As a result, we refer to our new method as \emph{Max-value Entropy Search (MES)}. As opposed to the $\argmax$, the maximum function value lives in a one-dimensional space, which greatly facilitates the estimation of the mutual information via sampling. We explore two strategies to make the entropy estimation efficient: an approximation by a Gumbel distribution, and a Monte Carlo approach that uses random features. 

Our contributions are as follows: (1) MES, a variant of the entropy search methods, which enjoys efficient computation and simple implementation; (2) an intuitive analysis which establishes the first connection between ES/PES and the previously proposed criteria GP-UCB, PI and EST~\cite{wang2016est}, where the bridge is formed by MES; (3) a regret bound for a variant of MES, which, to our knowledge, is the first regret bound established for any variant of the entropy search methods; (4) an extension of MES to the high dimensional settings via additive Gaussian processes; and (5) %
empirical evaluations which demonstrate that MES identifies good points as quickly or better than ES/PES, but is much more efficient and robust in estimating the mutual information, and therefore much faster than its input-space counterparts.

After acceptance of this work, we learned that \citet{hoffmanoutput} independently arrived at the acquisition function in Eq.~\eqref{eq:mes}. Yet, our approximation (Eq.~\eqref{eq:mesapprox}) is different, and hence the actual acquisition function we evaluate and analyze is different.

\section{Background}
Our goal is to maximize a black-box function $f:\mathfrak X\rightarrow \R$ where $\mathfrak X \subset \R^{d}$ %
and $\mathfrak X$ is compact. At time step $t$, we select point $\vx_t$ and observe a possibly noisy function evaluation $y_t=f(\vx_t)+\epsilon_t$, where $\epsilon_t \sim \mathcal N(0,\sigma^2)$ are i.i.d.\ Gaussian variables. %
 We use Gaussian processes~\cite{rasmussen2006gaussian} to build a probabilistic model of the black-box function to be optimized. For high dimensional cases, we use a variant of the additive Gaussian process~\cite{duvenaud2011additive,kandasamy2015high}. For completeness, we here introduce some basics of GP and add-GP.  %

\subsection{Gaussian Processes}
\label{ssec:gp}
Gaussian processes (GPs) are distributions over functions, and popular priors for Bayesian nonparametric regression.
In a GP, any finite set of function values has a multivariate Gaussian distribution. A Gaussian process $GP(\mu,k)$ is fully specified by a mean function $\mu(\vx)$ and covariance (kernel) function $k(\vx,\vx')$. 
\hide{
Two frequently used examples of covariance kernel functions are the squared exponential and Mat\'ern kernels.
Let $r=(\vx-\vx')^\top(\vx-\vx')$. Then the squared exponential kernel is 
$$k(\vx,\vx') = \sigma_f^2 \mathrm e^{-\frac{1}{2\ell^2} r},$$ 
with parameters $(\sigma_f,\ell)$. The Mat\'ern kernel is given by 
$$k(\vx,\vx') = \sigma_m^2\frac{2^{1-\xi}}{\Gamma(\xi)} (\frac{\sqrt{2\xi r}}{h})^{\xi}B_\xi(\frac{\sqrt{2\xi r}}{h}),$$ 
where $\Gamma$ is the gamma function, $B_\xi$ is a modified Bessel function, and we have the parameters $\sigma_m$, $h$, and the roughness parameter $\xi$.
}
Let $f$ be a function sampled from $GP(\mu,k)$. %
Given the observations $D_t=\{(\vx_\tau,y_\tau)\}_{\tau=1}^{t}$, we obtain the posterior mean 
$\mu_{t}(\vx) = \vk_t(\vx)\T(\mK_t+\sigma^2\mI)^{-1}\vy_t$
and posterior covariance 
$k_{t}(\vx,\vx') = k(\vx,\vx') - \vk_t(\vx)\T(\mK_t+\sigma^2\mI)^{-1} \vk_t(\vx')$
of the function via the kernel matrix $\mK_t =\left[k(\vx_i,\vx_j)\right]_{\vx_i,\vx_j\in D_t}$ and $\vk_t(\vx) = [k(\vx_i,\vx)]_{\vx_i\in D_t}$~\citep{rasmussen2006gaussian}.
The posterior variance is $\sigma^2_{t}(\vx) = k_t(\vx,\vx)$. %

\subsection{Additive Gaussian Processes}
\label{ssec:addgp}
Additive Gaussian processes (add-GP) were proposed in~\cite{duvenaud2011additive}, and analyzed in the BO setting in~\cite{kandasamy2015high}.  Following the latter, %
we assume that the function $f$ is a sum of independent functions sampled from Gaussian processes that are active on disjoint sets $A_m$ of input dimensions. Precisely,
$f(x) = \sum_{m=1}^M f^{(m)}(x^{A_m})$,
with $A_{i} \cap A_{j} = \emptyset$ for all $i\neq j$, $|\cup_{i=1}^M A_i|=d$, and $f^{(m)}\sim GP(\mu^{(m)}, k^{(m)})$, for all $m \leq M$ ($M \leq d < \infty$). %
As a result of this decomposition, the function $f$ is distributed according to $GP(\sum_{m=1}^M\mu^{(m)}, \sum_{m=1}^M k^{(m)})$. Given a set of noisy observations $D_t=\{(\vx_\tau,y_\tau)\}_{\tau=1}^{t}$ where $y_\tau \sim \mathcal N(f(x_\tau), \sigma^2)$, the posterior mean and covariance of the function component $f^{(m)}$ can be inferred as 
$\mu_{t}^{(m)}(\vx)= \vk^{(m)}_t(\vx)\T(\mK_t+\sigma^2\mI)^{-1}\vy_t$ and 
$k_{t}^{(m)}(\vx,\vx') = k^{(m)}(\vx,\vx') - \vk^{(m)}_t(\vx)\T(\mK_t+\sigma^2\mI)^{-1} \vk^{(m)}_t(\vx')$, where $\vk^{(m)}_t(\vx) = [k^{(m)}(\vx_i,\vx)]_{\vx_i\in D_t}$ and  $\mK_t =\left[ \sum_{m=1}^M k^{(m)}(\vx_i,\vx_j)\right]_{\vx_i,\vx_j\in D_t}$. For simplicity, we use the shorthand $k^{(m)}(\vx,\vx') = k^{(m)}(\vx^{A_m}, \vx'^{A_m})$.

\subsection{Evaluation Criteria}
\label{ssec:eval}
We use two types of evaluation criteria for BO, \emph{simple regret} and \emph{inference regret}. In each iteration, we choose to evaluate one input $\vx_t$ to ``learn'' where the $\argmax$ of the function is. %
The simple regret $r_T = \max_{\vx\in\mathfrak X} f(\vx) - \max_{t\in[1,T]} f(\vx_t)$ measures the value of the best queried point so far. %
After all queries, we may infer an $\argmax$ of the function, which is usually chosen as $\tilde \vx_T = \argmax_{\vx\in\mathfrak X} \mu_{T}(\vx)$~\cite{hennig2012,hernandez2014predictive}. We denote the inference regret as $R_T = \max_{\vx\in\mathfrak X} f(\vx) - f(\tilde x_T)$ which characterizes how satisfying our inference of the $\argmax$ is. %

\section{Max-value Entropy Search}
Entropy search methods use an information-theoretic perspective to select where to evaluate. They
 find a query point that maximizes the information about the location $\vx_* = \argmax_{\vx \in \mathfrak X} f(x)$ whose value $y_* = f(\vx_*)$ achieves the global maximum of the function $f$. Using the negative differential entropy of $p(\vx_* | D_t)$ to characterize the uncertainty about $\vx_*$, ES and PES use the acquisition functions
 \begin{align}
&\alpha_t(x) = 
I(\{\vx,y\}; \vx_*\mid D_t) \\
&\;\;\;\;= H\left(p(\vx_*\mid D_t)\right)- \Ex\left[H(p(\vx_*\mid D_t\cup \{\vx, y\})) \right] \label{eq:es}\\
&\;\;\;\; = H(p(y\mid D_t,\vx))- \Ex\left[H(p(y\mid D_t, \vx, \vx_*)) \right] \label{eq:pes}.
\end{align}
ES uses formulation~\eqref{eq:es}, in which the expectation is over $p(y| D_t, \vx)$, while PES uses the equivalent, symmetric formulation~\eqref{eq:pes}, where the expectation is over $p(\vx_* | D_t)$. Unfortunately, both $p(\vx_* | D_t)$ and its entropy is analytically intractable and have to be approximated via expensive computations.
Moreover, the optimum may not be unique, adding further complexity to this distribution.

We follow the same information-theoretic idea but propose a much cheaper and more robust objective to compute. Instead of measuring the information about the argmax $\vx_*$, we use the information about the \emph{maximum value} $y_* = f(\vx_*)$. 
Our acquisition function is the gain in mutual information between the maximum $y_*$ and the next point we query, which can be approximated analytically by evaluating the entropy of the predictive distribution:
\begin{align}
&\alpha_t(x) = 
I(\{\vx,y\}; y_*\mid D_t) \\
&= H(p(y\mid D_t, \vx)) - \Ex[H(p(y\mid D_t, \vx, y_*)) ] \label{eq:mes} \\
& \approx \frac{1}{K}\sum_{y_*\in Y_*} \left[\frac{\gamma_{y_*}( \vx)\psi(\gamma_{y_*}( \vx))}{2\Psi(\gamma_{y_*}(\vx))} - \log(\Psi(\gamma_{y_*}( \vx))) \right] \label{eq:mesapprox}
\end{align}
where $\psi$ is the probability density function and $\Psi$ the cumulative density function of a normal distribution, and $\gamma_{y_*}(\vx) = \frac{y_* - \mu_t(\vx)}{\sigma_t(\vx)}$. %
The expectation in Eq.~\eqref{eq:mes} is over $p(y_*|D_n)$, which is approximated using Monte Carlo estimation by sampling a set of $K$ function maxima. %
 Notice that the probability in the first term $p(y | D_t, \vx)$ is a Gaussian distribution with mean $\mu_t(\vx)$ and variance $k_t(\vx,\vx)$. The probability in the second term $p(y | D_n, \vx, y_*)$ is a truncated Gaussian distribution: given $y_*$, the distribution of $y$ needs to satisfy $y < y_*$.  Importantly, while ES and PES rely on the expensive, $d$-dimensional distribution $p(\vx_* | D_t)$, here, we use the one-dimensional $p(y_*|D_n)$, which is computationally much easier.

It may not be immediately intuitive that the \emph{value} should bear sufficient information for a good search strategy. Yet, the empirical results in Section~\ref{sec:exp} will demonstrate that this strategy is typically at least as good as ES/PES. From a formal perspective,  \citet{wang2016est} showed how an estimate of the maximum value implies a good search strategy (EST). Indeed, Lemma~\ref{lem:equivalence} will make the relation between EST and a simpler, degenerate version of MES explicit.

Hence, it remains to determine how to sample $y_*$. We propose two strategies: (1) sampling from an approximation via a Gumbel distribution; and (2) sampling functions from the posterior Gaussian distribution and maximizing the functions to obtain samples of $y_*$. We present the MES algorithm in Alg.~\ref{alg:mes}.%

\begin{algorithm} %
  \caption{Max-value Entropy Search (MES)%
  }\label{alg:mes}
  \begin{algorithmic}[1]
    \FUNCTION{MES\,($f, D_0$)}
      \FOR{$t = 1,\cdots, T $}
      \STATE $\alpha_{t-1}(\cdot)\gets$\textsc{Approx-MI\,}($D_{t-1}$)%
      \STATE $\vx_t\gets \argmax_{\vx\in\mathfrak X}{\alpha_{t-1}(\vx)}$ 
      \STATE $y_t\gets f(\vx_t) + \epsilon_t, \epsilon_t\sim\mathcal N(0,\sigma^2)$
      \STATE $\mathfrak D_t \gets D_{t-1}\cup \{\vx_t,y_t\}$
      \ENDFOR
      \ENDFUNCTION
      \item[]
    \FUNCTION{Approx-MI\,($D_t$)}%
    \IF{Sample with Gumbel}
    \STATE approximate $\Pr[\hat{y}_* < y]$ with $\mathcal G(a,b)$
    \STATE sample a $K$-length vector $\vr\sim \text{Unif}([0,1])$
    \STATE $\vy_* \gets a-b\log(-\log \vr)$
    \ELSE
    \FOR{$i=1,\cdots,K$}
    \STATE sample $\tilde f \sim GP(\mu_t,k_t \mid D_t)$
    \STATE $y_{*(i)} \gets \max_{\vx\in\mathfrak X}{\tilde f(\vx)}$
    \ENDFOR 
    \STATE $\vy_* \gets [y_{*(i)}]_{i=1}^K$
    \ENDIF
    \STATE \textbf{return} $\alpha_t(\cdot)$ in Eq.~\eqref{eq:mesapprox}
    \ENDFUNCTION
  \end{algorithmic}
\end{algorithm}

\subsection{Gumbel Sampling} %
\label{spec:gumbel}
The marginal distribution of $f(x)$ for any $x$ is a one-dimensional Gaussian, and hence the distribution of $y^*$ may be viewed as the maximum of an infinite collection of dependent Gaussian random variables.
Since this distribution is difficult to compute, we make two simplifications. First, we replace the continuous set $\mathfrak X$ by a discrete (finite), dense subset $\hat{\mathfrak X}$ of representative points. If we select $\hat{\mathfrak X}$ to be an $\epsilon$-cover of $\mathfrak X$ and the function $f$ is Lipschitz continuous with constant $L$, then we obtain a valid upper bound on $f(\mathfrak X)$ by adding $\epsilon L$ to any upper bound on $f(\hat{\mathfrak X})$.

Second, we use a ``mean field'' approximation and treat the function values at the points in $\hat{\mathfrak X}$ as independent. This approximation tends to over-estimate the maximum; this follows from Slepian's lemma if $k(x,x') \geq 0$. Such upper bounds still lead to optimization strategies with vanishing regret, whereas lower bounds may not \citep{wang2016est}.

We sample from the approximation $\hat{p}(y^*|D_n)$ via its cumulative distribution function (CDF) $\widehat{\Pr}[y_* < z] = \prod_{\vx\in\hat{\mathfrak X}} \Psi(\gamma_z(\vx))$. That means we sample $r$ uniformly from $[0,1]$ and find $z$ such that $\Pr[y_* < z] = r$. A binary search for $z$ to accuracy $\delta$ requires $O(\log\tfrac1\delta)$ queries to the CDF, and each query takes $O(|\hat{\mathfrak X}|)\approx O(n^d)$ time, so we obtain an overall time of $O(M|\hat{\mathfrak X}| \log \frac{1}{\delta})$ for drawing $M$ samples.

To sample more efficiently, we propose a $O(M+|\hat{\mathfrak X}|\log \frac{1}{\delta})$-time strategy, by approximating the CDF by a Gumbel distribution: $\widehat{\Pr}[y_* < z]\approx \mathcal G(a,b) = e^{-e^{-\frac{z-a}{b}}}$. This choice is motivated by the Fisher-Tippett-Gnedenko theorem~\cite{fisher1930genetical}, which states that the maximum of a set of i.i.d. Gaussian variables is asymptotically described by a Gumbel distribution (see the appendix for further details). This does not in general extend to non-i.i.d. Gaussian variables, but we nevertheless observe that in practice, this approach yields a good and fast approximation.

We sample from the Gumbel distribution via the Gumbel quantile function: we sample $r$ uniformly from $[0,1]$, and let the sample be $y = \mathcal G^{-1}(a,b) = a-b\log(-\log r)$. We set the appropriate Gumbel distribution parameters $a$ and $b$ by percentile matching and solve the two-variable linear equations $a-b\log(-\log r_1) = y_1$ and $a-b\log(-\log r_2) = y_2$, where $\Pr[{y}_* < y_1]=r_1$ and $\Pr[{y}_* < y_2]=r_2$. In practice, we use $r_1=0.25$ and $r_2=0.75$ so that the scale of the approximated Gumbel distribution
is proportional to the interquartile range of the CDF $\hat{\Pr}[y_* < z]$. %

\subsection{Sampling $y_*$ via Posterior Functions}
For an alternative sampling strategy we follow~\cite{hernandez2014predictive}: we draw functions from the posterior GP and then maximize each of the sampled functions.
Given the observations $D_t = \{(\vx_\tau, y_\tau)_{\tau=1}^t\}$, we can approximate the posterior Gaussian process using a 1-hidden-layer neural network $\tilde f(\vx) = \va_t\T \vphi(\vx)$ where $\vphi(x)\in \R^D$ is a vector of feature functions~\cite{neal96,rahimi2007random}
and the Gaussian weight $\va_t\in\R^D$ is distributed according to a multivariate Gaussian $\mathcal N(\vnu_t, \mSigma_t)$.

\textbf{Computing $\vphi(x)$.} By Bochner's theorem~\cite{rudin2011fourier}, the Fourier transform $\hat k$ of a continuous and translation-invariant kernel $k$ is guaranteed to be a probability distribution. Hence we can write the kernel of the GP to be $k(\vx,\vx') = \Ex_{\ww\sim\hat k(\ww)}[e^{i\ww\T (\vx-\vx')}] =\Ex_{c\sim U[0,2\pi]} \Ex_{\hat k} [2 \cos(\ww\T \vx + c)\cos(\ww\T \vx' + c)]$ and approximate the expectation by $k(\vx,\vx')\approx \vpp\T(\vx)\vpp(\vx')$ where $\phi_i(\vx) = \sqrt{\frac2D} \cos(\ww_i\T \vx + c_i)$, $\ww_i\sim \hat \kk(\ww)$, and $c_i \sim U[0,2\pi]$ for $i = 1,\dots, D$.

\textbf{Computing $\vnu_t, \mSigma_t$.} By writing the GP as a random linear combination of feature functions $\va^T_t\vphi(\vx)$, we are defining the mean and covariance of the GP to be $\mu_t(\vx) = \vnu\T \vphi(\vx)$ and $k(\vx,\vx') = \vphi(\vx)\T\mSigma_t\vphi(\vx')$. Let $Z = [z_1, \cdots, z_t] \in \R^{D\times t}$, where $z_\tau \coloneqq \vpp (\vx_\tau) \in \R^D$. 
The GP posterior mean and covariance in Section~\ref{ssec:gp} become
$\mu_t(\vx) = z\T Z (Z\T Z + \sigma^2 \mI)^{-1} \vy_t$ and 
$k_t (\vx, \vx')= z\T z' - z\T Z (Z\T Z + \sigma^2 \mI)^{-1} Z\T z'$. 
Because $Z(Z\T Z + \sigma^2 \mI)^{-1} = (ZZ\T +\sigma^2 \mI)^{-1}Z$, we can simplify the above equations and obtain $\vnu_t = \sigma^{-2}\mSigma_t Z_t\vy_t $ and $\mSigma_t = (Z Z\T \sigma^{-2} +\mI)^{-1}$.

To sample a function from this random 1-hidden-layer neural network, we sample $\tilde \va$ from $\mathcal N(\vnu_t, \mSigma_t)$ and construct the sampled function $\tilde f = \tilde \va\T \vphi(\vx)$. Then we optimize $\tilde f$ with respect to its input to get a sample of the maximum of the function $\max_{\vx\in\mathfrak X} \tilde f(\vx)$.

\hide{
We first review Mercer's theorem~\cite{mercer1909functions}, which builds the foundation of kernel methods.
\begin{thm}[Mercer~\cite{mercer1909functions}]
Any kernel $\kk:\mathfrak X \times \mathfrak X \rightarrow \R$ satisfying $\int \kk(x,x')f(x)f(x')\dif x \dif x' \geq 0$ for all $L_2(\mathfrak X)$ measurable functions $f$ can be expanded into 
\begin{align}
\kk(x,x') = \sum_j \lambda^{(j)} \phi_j(x)\phi_j(x'), 
\end{align}
 where $\lambda_j$ and $\phi_j$ are orthonormal on $L_2(\mathfrak X)$.

\end{thm}
Without loss of generality, we assume $\sum_j \lambda^{(j)} = 1, p(\lambda^{(j)}) = \lambda^{(j)}$. We can get a low variance approximation of $\kk(x,x')$ by sampling $\lambda\sim p(\lambda)$ and the corresponding random feature $\phi_\lambda$.
\begin{align}
\kk(x,x') = \Ex_{\lambda}[\phi_\lambda(x)\phi_{\lambda}(x')] \approx \frac{1}{D} \sum_{i=1}^D \phi_{\lambda_i}(x)\phi_{\lambda_i}(x') = \frac{1}{D}\vpp(x)\T\vpp(x'), 
\end{align}

where $\vpp(x)\in \R^D$ is the random feature vector for $x$. Let $z \coloneqq \frac{1}{\sqrt{D}} \vpp (x), z_\tau \coloneqq \frac{1}{\sqrt{D}} \vpp (x_\tau) \in \R^D$ and $Z = [z_1, \cdots, z_t] \in \R^{D\times t}$.

The GP posterior mean and covariance become
\begin{align}
\mu_t(x) &= z\T Z (Z\T Z + \sigma^2 \mI)^{-1} \vy_t, \label{eq:gpprimal1}\\
\kk_t (x)&= z\T z - z\T Z (Z\T Z + \sigma^2 \mI)^{-1} Z\T z. \label{eq:gpprimal2}
\end{align}

Because $Z(Z\T Z + \sigma^2 \mI)^{-1} = (ZZ\T +\sigma^2 \mI)^{-1}Z$, Eq.~\eqref{eq:gpprimal1} and Eq.~\eqref{eq:gpprimal2} can be simplified to

\begin{align}
\Sigma_t & = (Z Z\T \sigma^{-2} +\mI)^{-1}\\
\mu_t(x) &= \sigma^{-2}z\T \Sigma_t Z \vy_t  \\
\kk_t(x) &= z\T \Sigma_t z %
\end{align}
}

\subsection{Relation to Other BO Methods}
As a side effect, our new acquisition function draws connections between ES/PES and other popular BO methods. The connection between MES and ES/PES follows from the information-theoretic viewpoint; the following lemma makes the connections to other methods explicit.
\begin{lem}\label{lem:equivalence}
  The following methods are equivalent:
  \vspace{-5pt}
\begin{enumerate}\setlength{\itemsep}{-1pt}
\item MES, where we only use a single sample $y_*$ for $\alpha_t(x)$;
\item EST with $m = y_*$;
\item GP-UCB with $\beta^{\frac12} = \min_{\vx\in\mathfrak X} \frac{y_*-\mu_{t} (\vx)}{\sigma_{t} (\vx)}$;
\item PI with $\theta=y_*$.
\end{enumerate}
\end{lem}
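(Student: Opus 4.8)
The plan is to show that all four criteria pick the same query point $\vx_t$, namely any minimizer over $\mathfrak X$ of $\gamma_{y_*}(\vx) = \frac{y_* - \mu_t(\vx)}{\sigma_t(\vx)}$. Two of the equivalences are essentially immediate once phrased this way. With $K=1$ and the single sample $y_*$, Eq.~\eqref{eq:mesapprox} reads $\alpha_t(\vx) = g(\gamma_{y_*}(\vx))$ with $g(\gamma) := \frac{\gamma\psi(\gamma)}{2\Psi(\gamma)} - \log\Psi(\gamma)$, so MES picks $\argmax_\vx g(\gamma_{y_*}(\vx))$, which coincides with $\argmin_\vx \gamma_{y_*}(\vx)$ as soon as $g$ is shown to be strictly decreasing (the crux, below). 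EST with $m = y_*$ is, by the defining rule of \citet{wang2016est}, exactly $\argmin_\vx \frac{m-\mu_t(\vx)}{\sigma_t(\vx)} = \argmin_\vx \gamma_{y_*}(\vx)$. For PI with $\theta = y_*$ the acquisition value is $\Pr[f(\vx)\ge y_*\mid D_t] = \Psi(-\gamma_{y_*}(\vx))$, and since $\Psi$ is strictly increasing this is maximized precisely at the minimizers of $\gamma_{y_*}$. For GP-UCB with $\beta^{\frac12} = \nu := \min_\vx \gamma_{y_*}(\vx)$, I would note that for every $\vx$ with $\sigma_t(\vx)>0$, $\mu_t(\vx)+\nu\sigma_t(\vx) \le \mu_t(\vx)+\gamma_{y_*}(\vx)\sigma_t(\vx) = y_*$ with equality iff $\gamma_{y_*}(\vx)=\nu$, so $\argmax_\vx[\mu_t(\vx)+\nu\sigma_t(\vx)] = \argmin_\vx\gamma_{y_*}(\vx)$ (the GP-UCB/EST identity of \citet{wang2016est}). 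Throughout I use the implicit assumption $y_* > \max_\vx \mu_t(\vx)$, so that $\nu>0$ and $\beta$ is a legitimate UCB coefficient, and that $\sigma_t(\vx)>0$ on the relevant domain so the $\argmin$ is meaningful.

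The substantive step is the monotonicity of $g$. Writing $h = \psi/\Psi$ and using $\psi'(\gamma) = -\gamma\psi(\gamma)$, one gets $\frac{d}{d\gamma}\log\Psi(\gamma) = h(\gamma)$ and $h'(\gamma) = -h(\gamma)\bigl(\gamma+h(\gamma)\bigr)$, and therefore
\[
  g'(\gamma) \;=\; \tfrac12\bigl(h(\gamma)+\gamma h'(\gamma)\bigr) - h(\gamma) \;=\; -\tfrac12\,h(\gamma)\,\bigl(1+\gamma^2+\gamma\,h(\gamma)\bigr).
\]
Since $h>0$, it suffices to check $1+\gamma^2+\gamma h(\gamma) > 0$ for all $\gamma\in\R$. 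For $\gamma\ge 0$ this is obvious. For $\gamma = -t$ with $t>0$, $h(-t) = \psi(t)/(1-\Psi(t))$ is the reciprocal Mills ratio, and the classical bound $\frac{1-\Psi(t)}{\psi(t)} > \frac{t}{1+t^2}$ gives $h(-t) < t + \tfrac1t$, so $\gamma h(\gamma) = -t\,h(-t) > -(1+t^2) = -(1+\gamma^2)$, as needed. Hence $g'<0$ on $\R$, $g$ is strictly decreasing, and the MES reduction goes through.

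The main obstacle is precisely this monotonicity claim: it is the only place where an analytic estimate (the Mills-ratio inequality) enters, and without it the sign of $g'$ for negative arguments is not clear. I expect the rest to be routine bookkeeping, modulo the standard caveats — ties in the $\argmax/\argmin$, and degenerate points with $\sigma_t(\vx)=0$ — which can be dispatched by noting the GP posterior variance is positive away from previously queried points and by breaking ties consistently across the four rules. In the regime that actually arises in MES, where $y_*$ is sampled so as to upper-bound $f$ and hence $\gamma_{y_*}(\vx)>0$ everywhere, only the trivial case $\gamma\ge 0$ of the monotonicity is invoked, but establishing it on all of $\R$ keeps the statement clean.
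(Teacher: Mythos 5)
Your proposal is correct and follows essentially the same route as the paper: reduce the single-sample MES criterion to minimizing $\gamma_{y_*}(\vx)$ via the monotonicity of $g(u)=\frac{u\psi(u)}{2\Psi(u)}-\log\Psi(u)$, and identify this with EST/GP-UCB/PI. The only differences are that you actually verify the strict decrease of $g$ (via the Mills-ratio bound), which the paper simply asserts, and you re-derive the EST/GP-UCB/PI equivalences directly rather than citing Lemma~2.1 of \citet{wang2016est}; your Mills-ratio computation is correct, so these are welcome elaborations rather than deviations.
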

This equivalence no longer holds if we use $M>1$ samples of $y_*$ in MES.
\begin{proof}
  The equivalence among 2,3,4 is stated in Lemma 2.1 in~\cite{wang2016est}. What remains to be shown is the equivalence between 1 and 2. When using a single $y_*$ in MES,
  the next point to evaluate is chosen by maximizing $\alpha_t(\vx) =\gamma_{y_*}( \vx)\frac{\psi(\gamma_{y_*}( \vx))}{2\Psi(\gamma_{y_*}(\vx))} - \log(\Psi(\gamma_{y_*}( \vx)))$ and $\gamma_{y_*} = \frac{y_* - \mu_t(\vx)}{\sigma_t(\vx)}$. For EST with $m=y_*$, the next point to evaluate is chosen by minimizing $\gamma_{y_*}( \vx)$. Let us define a function $g(u) = u\frac{\psi(u)}{2\Psi(u)} - \log(\Psi(u))$. Clearly, $\alpha_t(\vx) = g(\gamma_{y_*}(\vx))$. Because $g(u)$ is a monotonically decreasing function, maximizing $g(\gamma_{y_*}(\vx))$ is equivalent to minimizing $\gamma_{y_*}(\vx)$. Hence 1 and 2 are equivalent.
\end{proof}

\subsection{Regret Bound}
The connection with EST directly leads to a bound on the simple regret of MES, when using only one sample of $y_*$. We prove Theorem~\ref{thm:regret} in the appendix.
\begin{restatable}[Simple Regret Bound]{thm}{regret}
\label{thm:regret}
Let $F$ be the cumulative probability distribution for the maximum of any function $f$ sampled from $GP(\mu, k)$ over the compact search space $\mathfrak X \subset R^{d}$, where $k(\vx,\vx')\leq 1,\forall \vx,\vx'\in\mathfrak X$.
Let $f_*=\max_{\vx\in \mathfrak X} f(\vx)$ 
 and $w = F(f_*) \in (0,1)$, and assume the observation noise is iid $\mathcal N(0,\sigma)$.
If in each iteration $t$, the query point is chosen as $\vx_t = \argmax_{\vx\in\mathfrak X} \gamma_{y^t_*}( \vx)\frac{\psi(\gamma_{y^t_*}( \vx))}{2\Psi(\gamma_{y^t_*}(\vx))} - \log(\Psi(\gamma_{y^t_*}( \vx)))$, where $\gamma_{y^t_*}(\vx) = \frac{y^t_* - \mu_t(\vx)}{\sigma_t(\vx)}$ and $y^t_*$ is drawn from $F$, then with probability at least $1-\delta$, in $T' = \sum_{i=1}^T\log_{w} \frac{\delta}{2\pi_i}$ number of iterations, the simple regret satisfies
\begin{align}
\label{eq:regret}
r_{T'} \leq  \sqrt{\frac{C \rho_T}{T}} (\bt_{t^*} + \zeta_T)
\end{align}
where $C = 2/\log (1+\sigma^{-2})$ and $\zeta_T=(2\log(\frac{\pi_T}{\delta}))^{\frac12}$; $\pi$ satisfies $\sum_{i=1}^T \pi_i^{-1} \leq 1$ and $\pi_t > 0$, and $t^*=\argmax_t \bt_t$ with $\bt_t \triangleq \min_{\vx\in\mathfrak X, y^t_* > f_*} \gamma_{y^t_*}(\vx)$, and $\rho_T$ is the maximum information gain of at most $T$ selected points.
\end{restatable}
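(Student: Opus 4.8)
The plan is to leverage the equivalence established in Lemma~\ref{lem:equivalence}: running MES with a single sample $y^t_*$ at iteration $t$ is exactly the same query rule as EST with the plug-in estimate $m = y^t_*$. Since \citet{wang2016est} already proved a simple regret bound for EST, the task reduces to (i) checking that the conditions of that bound hold here and (ii) handling the fact that $y^t_*$ is now a \emph{random} draw from the true max-value CDF $F$ rather than a deterministic or adaptively-tuned estimate. The core quantity that controls the EST regret is $\bt_t = \min_{\vx\in\mathfrak X} \gamma_{y^t_*}(\vx)$ restricted to the event that $y^t_* \geq f_*$ (so that $y^t_*$ is a valid upper bound on the function); on that event, the standard GP confidence-interval argument gives, for the chosen $\vx_t$, a one-step bound $f_* - f(\vx_t) \le (\bt_t + \zeta_T)\sigma_{t}(\vx_t)$, and then summing the posterior standard deviations $\sigma_t(\vx_t)$ over $t$ and using the maximum-information-gain bound $\sum_t \sigma_t^2(\vx_t) \le C\rho_T$ yields the $\sqrt{C\rho_T/T}$ rate via Cauchy--Schwarz.

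The first step I would carry out is to make precise the ``good event.'' Draw $y^t_*$ from $F$; since $w = F(f_*) \in (0,1)$, the probability that a single draw satisfies $y^t_* \ge f_*$ is $1-w$, so the probability that $k$ independent draws all fall below $f_*$ is $w^k$. Choosing the number of rounds so that $w^{k_t} \le \delta/(2\pi_t)$ — i.e. repeating the draw $\log_w(\delta/(2\pi_t))$ times at ``effective'' iteration $i$ — guarantees by a union bound over the $T$ effective iterations (using $\sum \pi_i^{-1}\le 1$) that with probability at least $1-\delta/2$ we always have at least one sample with $y^t_* \ge f_*$ available; this is where the peculiar iteration count $T' = \sum_i \log_w(\delta/(2\pi_i))$ comes from. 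Intersecting with the standard GP uniform-confidence event (which holds with probability $1-\delta/2$ after a union bound over iterations with the $\zeta_T$ / $\pi_t$ discretization à la Srinivas et al.) gives the overall $1-\delta$.

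Next I would run the per-iteration deterministic argument on the good event. Conditioned on $y^t_* \ge f_*$: by definition $\bt_t \le \gamma_{y^t_*}(\vx)$ for all $\vx$, in particular $\mu_t(\vx) + \bt_t\sigma_t(\vx) \le y^t_*$ for all $\vx$ and hence, combined with the confidence bound $f(\vx) \le \mu_t(\vx) + \zeta_T \sigma_t(\vx)$, one gets $f(\vx) \le y^t_* + (\zeta_T - \bt_t)\sigma_t(\vx)$; evaluated appropriately and using that $\vx_t$ minimizes $\gamma_{y^t_*}$ (equivalently maximizes $g(\gamma_{y^t_*})$, cf.\ Lemma~\ref{lem:equivalence}), this produces $f_* - f(\vx_t) \le (\bt_{t^*} + \zeta_T)\sigma_t(\vx_t)$ where $t^*$ indexes the worst (largest) $\bt_t$. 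Then $r_{T'} \le \min_{t\le T'} (f_* - f(\vx_t)) \le \frac1T \sum_{t=1}^T (f_* - f(\vx_t)) \le (\bt_{t^*}+\zeta_T)\frac1T\sum_t \sigma_t(\vx_t) \le (\bt_{t^*}+\zeta_T)\sqrt{\frac1T \sum_t \sigma_t^2(\vx_t)} \le (\bt_{t^*}+\zeta_T)\sqrt{C\rho_T/T}$, using Cauchy--Schwarz and the information-gain inequality $\sum_t \sigma_t^2(\vx_t) \le \frac{2}{\log(1+\sigma^{-2})}\rho_T = C\rho_T$. This is exactly Eq.~\eqref{eq:regret}.

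The main obstacle I anticipate is the probabilistic bookkeeping tying together two sources of randomness — the GP confidence failure and the event that \emph{every} sampled $y^t_*$ undershoots $f_*$ — while simultaneously justifying the exact form of $T'$; in particular one must be careful that $w = F(f_*)$ is well-defined and bounded away from $0$ and $1$ (continuity/non-degeneracy of the max-value distribution of a GP sample, needing $k(\vx,\vx')\le 1$ for normalization), and that the $\min$ in the definition of $\bt_t$ over $\{\vx : y^t_* > f_*\}$ is attained on the compact $\mathfrak X$. The deterministic per-round inequality and the Cauchy--Schwarz/information-gain chaining are essentially transcriptions of the EST analysis in \citet{wang2016est} and of the Srinivas et al.\ UCB argument, so those I would cite rather than re-derive.
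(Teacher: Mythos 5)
Your proposal is correct and follows essentially the same route as the paper's proof: reduce to the EST analysis via the single-sample equivalence, block the $T'$ iterations into $T$ groups of $k_i=\log_w\frac{\delta}{2\pi_i}$ draws so that each block contains a sample with $y^t_*\geq f_*$ with total failure probability $\delta/2$, intersect with the GP confidence event (the remaining $\delta/2$), apply the EST per-step regret lemma on the good iterations, and finish with Cauchy--Schwarz plus the information-gain bound to obtain $r_{T'}\leq(\bt_{t^*}+\zeta_T)\sqrt{C\rho_T/T}$. The only differences are cosmetic (e.g., your sketch of the per-round inequality versus directly citing Lemmas 3.2--3.3 of Wang and Jegelka, and $\sigma_t$ versus $\sigma_{t-1}$ indexing), so no substantive gap remains.
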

\hide{
\sj{I do not understand the following discussion. The reader does not know the bounds for EST or UCB unless you state them, and not sure anyone would know the effect of a large or small $y_*$ without reading your other paper. Maybe cut this, or make it more high-level, or discuss what we would expect for using more than one $y_*$}
At first sight, it might seem like MES with a point estimate does not have a converging rate as good as $EST$ or $GP-UCB$. However, notice that $\min_{\vx\in\mathfrak X}\gamma_{y_1}(\vx) < \min{\vx\in\mathfrak X} \gamma_{y_2}(\vx)$ if $y_1 < y_2$, which decides the rate of convergence in Eq.~\ref{eq:regret}. So if we use $y_*$ that is too large, the regret bound could be worse. If we use $y_*$ that is smaller than $f_*$, however, its value won't count towards the learning regret in our proof, so it is also bad for the regret upper bound. With no principled way of setting $y_*$ since $f_*$ is unknown. Our regret bound in Theorem~\ref{thm:regret} is a randomized trade-off between sampling large and small $y_*$.
}

\subsection{Model Adaptation}
\label{sec:model}

In practice we do not know the hyper-parameters of the GP, so we must adapt our GP model as we observe more data. 
A standard way to learn the GP hyper-parameters is to optimize the marginal data likelihood with respect to the hyper-parameters. As a full Bayesian treatment, we can also draw samples of the hyper-parameters using slice sampling~\cite{vanhatalo2013gpstuff}, and then marginalize out the hyper-parameters in our acquisition function in Eq.~\eqref{eq:mesapprox}. Namely, if we use %
$E$ to denote the set of sampled settings for the GP hyper-parameters, our acquisition function becomes
\begin{align*}
\alpha_t(x) = \sum_{\eta\in E}\sum_{y_*\in Y_*}\left[ \frac{\gamma^{\eta}_{y_*}( \vx)\psi(\gamma^{\eta}_{y_*}( \vx))}{2\Psi(\gamma^{\eta}_{y_*}(\vx))} - \log(\Psi(\gamma^{\eta}_{y_*}( \vx)))\right],
\end{align*} %
where $\gamma^{\eta}_{y_*}(\vx) = \frac{y_* - \mu^{\eta}_t(\vx)}{\sigma^{\eta}_t(\vx)}$ and the posterior inference on the mean function $\mu_t^{\eta}$ and $\sigma_t^{\eta}$  depends on the GP hyper-parameter setting $\eta$. Similar approaches have been used in~\cite{hernandez2014predictive, snoek2012practical}.

\hide{
For learning additive structure, we follow~\cite{newicml} and construct a Bayesian model for the additive structure where each input dimension is assigned to a unique group. The generative processes of the group assignment is as follows, 1) draw group mixing proportion $\theta\sim \textsc{Dir}(\alpha)$; 2) for each dimension $j$, choose a group assignment $w_j \sim \textsc{Multi}(\theta)$. Our function is $f(x) =\sum_{m=1}^M f^{(i)}(x^{A_m})$, where $A_{m} = \{{m: w_j=m)}\}$ is the set of active dimensions for function $f^{(i)}$. We integrate out $\theta$ and use Gibbs sampling to learn the assignments $w_j$. For each input dimension, we sample the group assignment from the conditional distribution 
$p(w_j = m\mid w_{\neg j}, D_t;\alpha) \propto e^{\phi_m}$
where $\phi_m = -\frac12 \vy\T (K_m+\sigma^2 \mI)^{-1}\vy -\frac12\log |K_m+\sigma^2 \mI| + 
\log (|A_m| + \alpha_m),$ and $K_m$ is the gram matrix associated with $D_t$ by setting $w_j = m$. We iterate between learning hyper-parameters and additive model to obtain a good setting of the additive structure and hyper-parameters.
}

\section{High Dimensional MES with Add-GP}

The high-dimensional input setting has been a challenge for many BO methods. 
We extend MES to this setting via additive Gaussian processes (Add-GP). In the past, Add-GP has been used and analyzed for GP-UCB~\cite{kandasamy2015high}, which assumed the high dimensional black-box function is a summation of several disjoint lower dimensional functions. Utilizing this special additive structure, we overcome the statistical problem of having insufficient data to recover a complex function, and the difficulty of optimizing acquisition functions in high dimensions.

Since the function components $f^{(m)}$ are independent, %
we can maximize the mutual information between the input in the active dimensions $A_m$ and maximum of $f^{(m)}$ for each component separately. Hence, we have a separate acquisition function for each component, where $y^{(m)}$ is the evaluation of $f^{(m)}$:
\begin{align}
&\alpha^{(m)}_t(\vx) = 
I(\{\vx^{A_m},y^{(m)}\}; y^{(m)}_*\mid D_t) \\
&= H(p(y^{(m)}\mid D_t, \vx^{A_m})) \nonumber \\
&\;\;\;\;- \Ex[H(p(y^{(m)}\mid D_t, \vx^{A_m}, y^{(m)}_*)) ] \label{eq:addmes} \\
& \approx \sum_{y_*^{(m)}} \gamma^{(m)}_{y_*}( \vx)\frac{\psi(\gamma^{(m)}_{y_*}( \vx))}{2\Psi(\gamma^{(m)}_{y_*}(\vx))}  - \log(\Psi(\gamma^{(m)}_{y_*}( \vx))) \label{eq:addmesapprox}
\end{align}
where $\gamma^{(m)}_{y_*}(\vx) = \frac{y^{(m)}_* - \mu^{(m)}_t(\vx)}{\sigma^{(m)}_t(\vx)}$. Analogously to the non-additive case, we sample $y^{(m)}_*$, separately for each function component. %
We select the final $x_t$ by choosing a sub-vector $x_t^{(m)} \in \arg\max_{\vx^{(m)} \in A_m} \alpha^{(m)}_t(\vx^{(m)})$ and concatenating the components.

\paragraph{Sampling $y^{(m)}_*$ with a Gumbel distribution.}
The Gumbel sampling from Section~\ref{spec:gumbel} directly extends to sampling $y^{(m)}_*$, approximately. We simply need to sample from the component-wise CDF $\widehat{\Pr}[{y}^{(m)}_* < z] = \prod_{\vx\in\hat{\mathfrak X}} \Psi(\gamma^{(m)}_y(\vx)))$, and use the same Gumbel approximation.

\paragraph{Sampling $y^{(m)}_*$ via posterior functions.} The additive structure removes some connections on the input-to-hidden layer of our 1-hidden-layer neural network approximation $\tilde f(\vx)=\va_t\T\vphi(\vx)$.  Namely, for each feature function $\phi$ there exists a unique group $m$ such that $\phi$ is only active on $\vx^{A_m}$, %
 and $\phi(\vx) =  \sqrt{\frac2D} \cos(\ww\T \vx^{A_m} + c)$ where $\R^{|A_m|}\ni\ww\sim \hat \kk^{(m)}(\ww)$ and $c \sim U[0,2\pi]$. Similar to the non-additive case, we may draw a posterior sample $\va_t\sim\mathcal N(\vnu_t,\mSigma_t)$ where $\vnu_t = \sigma^{-2}\mSigma_t Z_t\vy_t $ and $\mSigma_t = (Z Z\T \sigma^{-2} +\mI)^{-1}$. Let $B_m = \{i: \vphi_i(\vx) \text{ is active on $\vx^{A_m}$}\}$. The posterior sample for the function component $f^{(m)}$ is $\tilde f^{(m)}(\vx)=(\va^{B_m}_t)\T\phi^{B_m}(\vx^{A_m})$. Then we can maximize $\tilde f^{(m)}$ to obtain a sample for $y^{(m)}_*$.

The algorithm for the additive max-value entropy search method (add-MES) is shown in Algorithm~\ref{alg:addgpmes}. The function $\textsc{Approx-MI}$ does the pre-computation for approximating the mutual information in a similar way as in Algorithm~\ref{alg:mes}, except that it only acts on the active dimensions in the $m$-th group. %

\begin{algorithm} %
  \caption{Additive Max-value Entropy Search}\label{alg:addgpmes}
  \begin{algorithmic}[1]
    \FUNCTION{Add-MES\,($f, D_0$)}
      \FOR{$t = 1,\cdots, T $}
      \FOR{$m=1,\cdots,M$}
      \STATE $\alpha^{(m)}_{t-1}(\cdot)\gets$\textsc{Approx-MI\,}($D_{t-1}$)
      \STATE $\vx^{A_m}_t\gets \argmax_{\vx^{A_m}\in\mathfrak X^{A_m}}{\alpha_{t-1}^{(m)}(\vx)}$ 
      \ENDFOR
      \STATE $y_t\gets f(\vx_t) + \epsilon_t, \epsilon_t\sim\mathcal N(0,\sigma^2)$
      \STATE $\mathfrak D_t \gets D_{t-1}\cup \{\vx_t,y_t\}$
      \ENDFOR
      \ENDFUNCTION
  \end{algorithmic}
\end{algorithm}

\hide{
\subsection{Regret analysis}

\begin{thm}
Assume the function $f$ is structured according to Section~\ref{ssec:addgp}. For each function component $f^{(m)}$, let $\hat{\mathfrak X}^{A_m}$ be an $\epsilon$-covering of $\mathfrak X^{A_m}$. Assume there exists $a,b>0$ such that $\Pr[|f^{(m)}(\vx^{A_m}) - f^{(m)}(\vx'^{A_m})|\leq L\epsilon , \forall ||\vx^{A_m}-\vx'^{A_m}||\leq \epsilon]\geq 1-ae^{-L^2/b^2}.$
Let $\sigma^2$ be the variance of the Gaussian noise in the observation,
 $C = 2/\log (1+\sigma^{-2})$,  $\gamma_T$ the maximum information gain of the selected points, and $t^*=\argmax_t \bt_t$ where $\bt_t \triangleq \min_{\vx\in\mathfrak X} \frac{y^{(m)}_*+\epsilon L-\mu^{(m)}_{t-1}(\vx)}{\sigma^{(m)}_{t-1}(\vx)}$, and $y^{(m)}$ is sampled from $\Pr[y^{(m)}_* < y] = \prod_{\vx\in\hat{\mathfrak X}^{A_m}} \Psi(\gamma^{(m)}_y(\vx)))$. 
With probability at least $1-\delta$, the learning regret up to time step $T$ is bounded as
\begin{equation*}
   r_t\leq \sqrt{\frac{CM \gamma_T}{T}} (\bt_{t^*} + \zeta_T), 
\end{equation*}
where $\zeta_T=(2\log(\frac{\pi_t}{\delta}))^{\frac12}$, $\sum_{t=1}^T \pi_t^{-1} \leq 1$, $\pi_t > 0$, and $L=b\sqrt{\log\frac{2a}{\delta}}$.
\end{thm}
}

\section{Experiments}
\label{sec:exp}
In this section, we probe the empirical performance of MES and add-MES on a variety of tasks. Here, MES-G denotes MES with $y_*$ sampled from the approximate Gumbel distribution, and MES-R denotes MES with $y_*$ computed by maximizing a sampled function represented by random features. Following~\cite{hennig2012,hernandez2014predictive}, we adopt the zero mean function and non-isotropic squared exponential kernel as the prior for the GP. We compare to methods from the entropy search family, i.e., ES and PES, and to other popular Bayesian optimization methods including GP-UCB (denoted by UCB), PI, EI and EST. The parameter for GP-UCB was set according to Theorem 2 in~\cite{srinivas2009gaussian};
the parameter for PI was set to be the observation noise $\sigma$. For the functions with unknown GP hyper-parameters, every 10 iterations, we learn the GP hyper-parameters using the same approach as was used by PES~\cite{hernandez2014predictive}. For the high dimensional tasks, we follow~\cite{kandasamy2015high} and sample the additive structure/GP parameters with the highest data likelihood when they are unknown. We evaluate performance according to the simple regret and inference regret as defined in Section~\ref{ssec:eval}. We used the open source Matlab implementation of PES, ES and EST~\cite{hennig2012,hernandez2014predictive,wang2016est}.
 Our Matlab code and test functions are available at \url{https://github.com/zi-w/Max-value-Entropy-Search/}.
\subsection{Synthetic Functions}
\begin{figure}
\centering
\includegraphics[width=.5\textwidth]{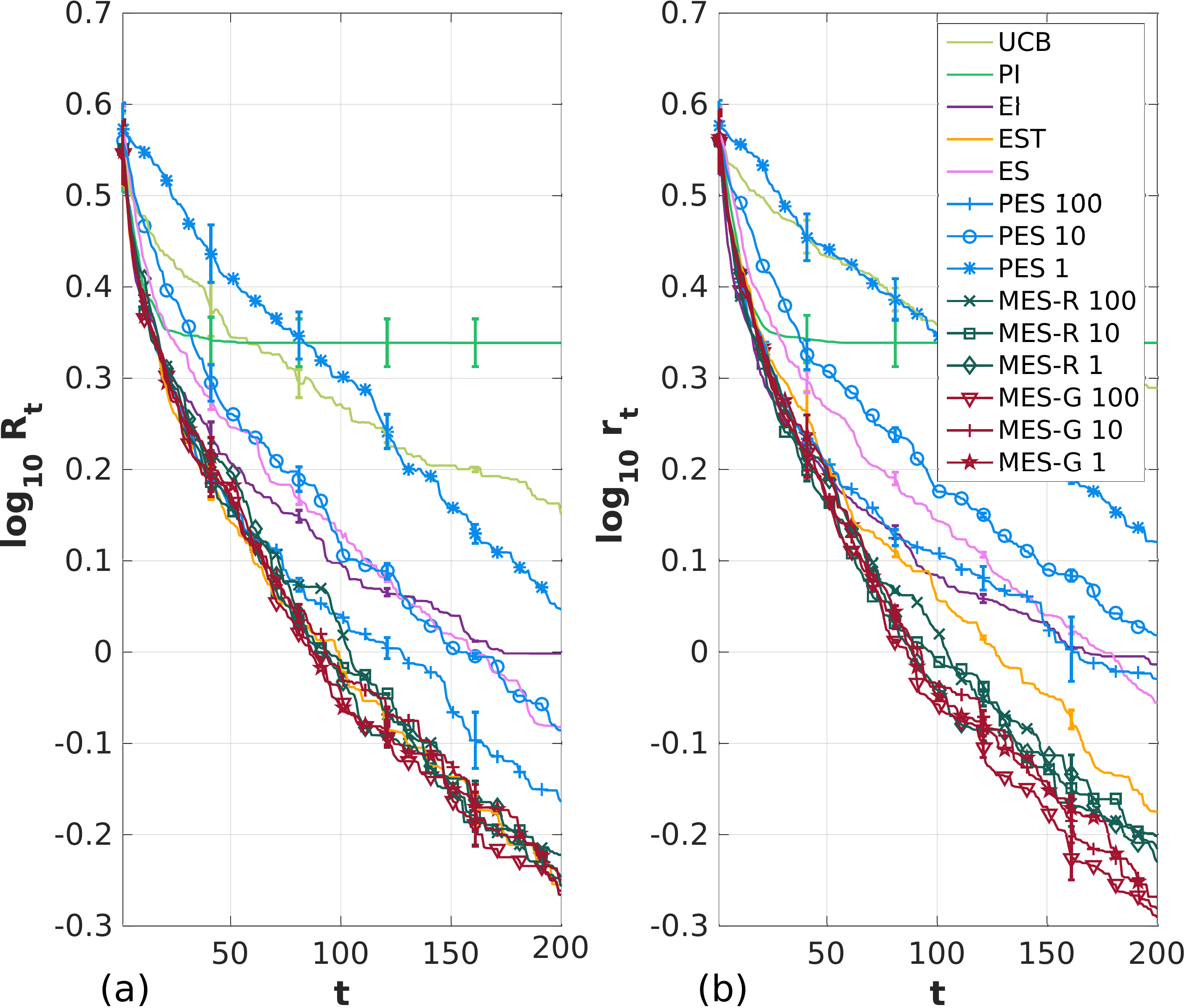}
\caption{(a) Inference regret; (b) simple regret. MES methods are much less sensitive to the number of maxima $y_*$ sampled for the acquisition function (1, 10 or 100) than PES is to the number of argmaxes $x_*$. }
\label{fig:synth1}
\end{figure}

\begin{table}[t]
\caption{The runtime of selecting the next input. PES 100 is significantly slower than other methods. MES-G's runtime is comparable to the fastest method EI while it performs better in terms of simple and inference regrets.}
\label{tb:botiming}
\vskip -0.2in
\begin{center}
\begin{small}
\begin{sc}
\begin{tabular}{lc|lc}%
\hline
\abovespace\belowspace
Method & Time (s) & Method & Time (s) \\
\hline
\abovespace
UCB    & {\color{red}$0.08\pm0.05$}  & PES 1& $0.20\pm 0.06$ \\
PI & {\color{red}$0.10\pm0.02 $}        & MES-R 100& $5.85\pm 0.86 $\\
EI    & {\color{red}$0.07\pm 0.03$}             & MES-R 10 & $0.67\pm 0.11$ \\
EST    & $0.15\pm0.02$    & MES-R 1&  $0.13\pm 0.03$       \\
ES    & $8.07\pm3.02$      &  MES-G 100 & $0.12\pm 0.02$\\
PES 100   & $15.24\pm4.44$    & MES-G 10 & {\color{red}$0.09\pm0.02 $} \\
PES 10    & $1.61\pm 0.50$      & MES-G 1&    {\color{red}$0.09\pm0.03$}     \\
\hline
\end{tabular}
\end{sc}
\end{small}
\end{center}
\vskip -0.2in
\end{table}

We begin with a comparison on synthetic functions sampled from a 3-dimensional GP, to probe our conjecture that MES is much more robust to the number of $y_*$ sampled to estimate the acquisition function than PES is to the number of $x_*$ samples.
For PES, we sample 100 (PES 100), 10 (PES 10) and 1 (PES 1) argmaxes for the acquisition function. Similarly, we sample 100, 10, 1 $y_*$ values for MES-R and MES-G. We average the results on 100 functions sampled from the same Gaussian kernel with scale parameter $5.0$ and bandwidth parameter $0.0625$, and
observation noise $\mathcal{N}(0,0.01^2)$.

Figure~\ref{fig:synth1} shows the simple and inference regrets. 
For both regret measures, PES is very sensitive to the the number of $x_*$ sampled for the acquisition function: 100 samples lead to much better results than 10 or 1. In contrast, both MES-G and MES-R perform competitively even with 1 or 10 samples.
Overall, MES-G is slightly better than MES-R, and both MES methods performed better than other ES methods. 
MES methods performed better than all other methods with respect to simple regret. For inference regret, MES methods performed similarly to EST, and much better than all other methods including PES and ES. %

In Table~\ref{tb:botiming}, we show the runtime of selecting the next input per iteration\footnote{All the timing experiments were run exclusively on an Intel(R) Xeon(R) CPU E5-2680 v4 @ 2.40GHz. The function evaluation time is excluded.} using GP-UCB, PI, EI, EST, ES, PES, MES-R and MES-G on the synthetic data with fixed GP hyper-parameters. For PES and MES-R, every $x_*$ or $y_*$ requires running an optimization sub-procedure, so their running time grows noticeably with the number of samples. MES-G avoids this optimization, and competes with the fastest methods EI and UCB.

In the following experiments, we set the number of $x_*$ sampled for PES to be 200, and the number of $y_*$ sampled for MES-R and MES-G to be 100 unless otherwise mentioned. 

\subsection{Optimization Test Functions}
\hide{
\begin{figure*}
\centering
\includegraphics[width=0.85\textwidth]{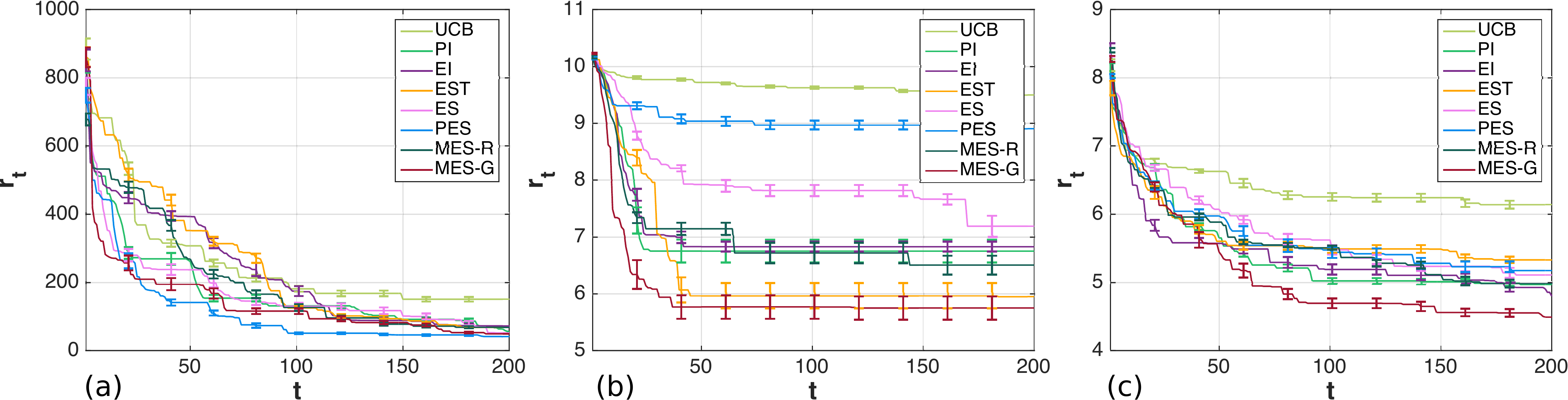}
\caption{(a) 2-d eggholder function; (b) 10-d Shekel function; (c) 10-d Michalewicz function. PES achieves lower regret on the 2-d function while MES-G performed better than other methods on the two 10-d optimization test functions.}
\label{fig:test}
\end{figure*}
}
We test on three challenging optimization test functions: the 2-dimensional eggholder function, the 10-dimensional Shekel function and the 10-dimensional Michalewicz function. All of these functions have many local optima. We randomly sample 1000 points to learn a good GP hyper-parameter setting, and then run the BO methods with the same hyper-parameters. The first observation is the same for all methods.
We repeat the experiments 10 times. The averaged simple regret is shown in the appendix%
, and the inference regret is shown in Table~\ref{tb:test}. On the 2-d eggholder function, PES was able to achieve better function values faster than all other methods, which verified the good performance of PES when sufficiently many $x_*$ are sampled. However, for higher-dimensional test functions, the 10-d Shekel and 10-d Michalewicz function, MES methods performed much better than PES and ES, and MES-G performed better than all other methods. 

\begin{table}[t]
\caption{Inference regret $R_T$ for optimizing the eggholder function, Shekel function, and Michalewicz function.}
\label{tb:test}
\vskip -0.5in
\begin{center}
\begin{small}
\begin{sc}
\begin{tabular}{llll}%
\hline
\abovespace\belowspace
Method & Eggholder & Shekel & Michalewicz \\
\hline
\abovespace
UCB & $141.00 \pm 70.96$ & $9.40 \pm 0.26$ & $6.07 \pm 0.53$   \\
PI & $52.04 \pm 39.03$ & $6.64 \pm 2.00$ & $4.97 \pm 0.39$   \\
EI & $71.18 \pm 59.18$ & $6.63 \pm 0.87$ & $4.80 \pm 0.60$   \\
EST & $55.84 \pm 24.85$ & $5.57 \pm 2.56$ & $5.33 \pm 0.46$   \\
ES & $48.85 \pm 29.11$ & $6.43 \pm 2.73$ & $5.11 \pm 0.73$   \\
PES & {\color{red}$37.94 \pm 26.05$} & $8.73 \pm 0.67$ & $5.17 \pm 0.74$   \\
MES-R & $54.47 \pm 37.71$ & $6.17 \pm 1.80$ & $4.97 \pm 0.59$   \\
\belowspace
MES-G & $46.56 \pm 27.05$ & {\color{red}$5.45 \pm 2.07$} & {\color{red}$4.49 \pm 0.51$}   \\
\hline
\end{tabular}
\end{sc}
\end{small}
\end{center}
\vskip -0.2in
\end{table}

\subsection{Tuning Hyper-parameters for Neural Networks}
\begin{figure}
\centering
\includegraphics[width=0.45\textwidth]{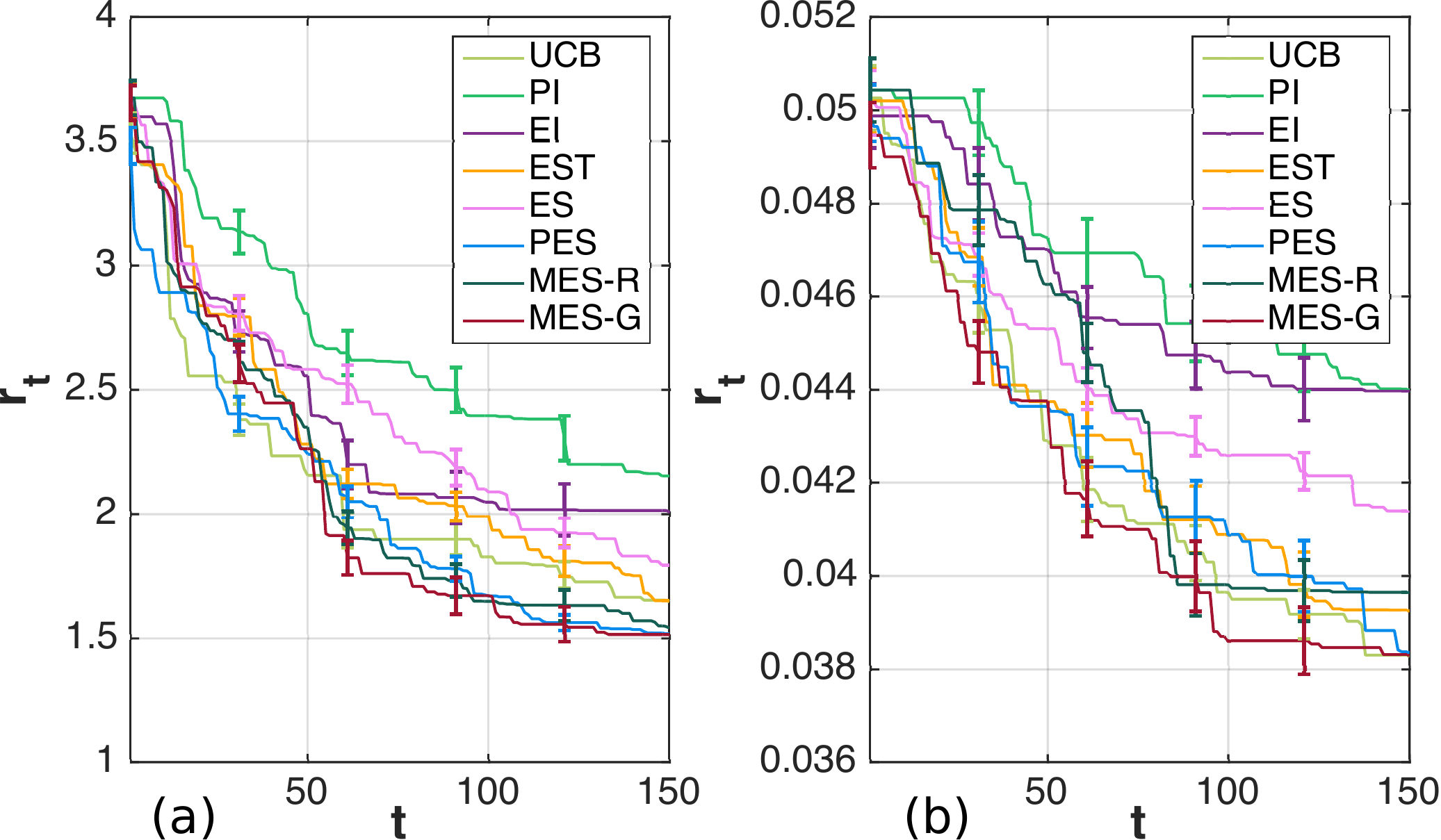}
\caption{Tuning hyper-parameters for training a neural network, (a) Boston housing dataset; (b) breast cancer dataset. MES methods perform better than other methods on (a), while for (b), MES-G, UCB, PES perform similarly and better than others.}
\label{fig:nnet}
\end{figure}

\begin{table}[t]
\caption{Inference regret $R_T$ for tuning neural network hyper-parameters on the Boston housing and breast cancer datasets.}
\label{tb:nnet}
\vskip -0.5in
\begin{center}
\begin{small}
\begin{sc}
\begin{tabular}{lll}%
\hline
\abovespace\belowspace
Method & Boston & Cancer (\%) \\
\hline
\abovespace
UCB & $1.64 \pm 0.43$ & {\color{red}$3.83 \pm 0.01$}  \\
PI & $2.15 \pm 0.99$ & $4.40 \pm 0.01$  \\
EI & $1.99 \pm 1.03$ & $4.40 \pm 0.01$  \\
EST & $1.65 \pm 0.57$ & $3.93 \pm 0.01$  \\
ES & $1.79 \pm 0.61$ & $4.14 \pm 0.00$  \\
PES &  {\color{red}$1.52 \pm 0.32$} &  {\color{red}$3.84 \pm 0.01$}  \\
MES-R & $1.54 \pm 0.56$ & $3.96 \pm 0.01$  \\
\belowspace
MES-G & {\color{red}$1.51 \pm 0.61$} & {\color{red}$3.83 \pm 0.01$}  \\
\hline
\end{tabular}
\end{sc}
\end{small}
\end{center}
\vskip -0.2in
\end{table}

Next, we experiment with Levenberg-Marquardt optimization for training a 1-hidden-layer neural network. The 4 parameters we tune with BO are the number of neurons,  the damping factor $\mu$, the $\mu$-decrease factor, and  the $\mu$-increase factor. We test regression on the Boston housing dataset and classification on the breast cancer dataset~\cite{bache2013uci}. The experiments are repeated 20 times, and the neural network's weight initialization and all other parameters are set to be the same to ensure a fair comparison. Both of the datasets were randomly split into train/validation/test sets. We initialize the observation set to have 10 random function evaluations which were set to be the same across all the methods. The averaged simple regret for the regression L2-loss on the validation set of the Boston housing dataset is shown in Fig.~\ref{fig:nnet}(a), and the classification accuracy on the validation set of the breast cancer dataset is shown in Fig.~\ref{fig:nnet}(b).   For the classification problem on the breast cancer dataset, MES-G, PES and UCB achieved a similar simple regret. On the Boston housing dataset, MES methods achieved a lower simple regret. We also show the inference regrets for both datasets in Table~\ref{tb:nnet}.

\subsection{Active Learning for Robot Pushing}
\begin{figure*}
\centering
\includegraphics[width=0.95\textwidth]{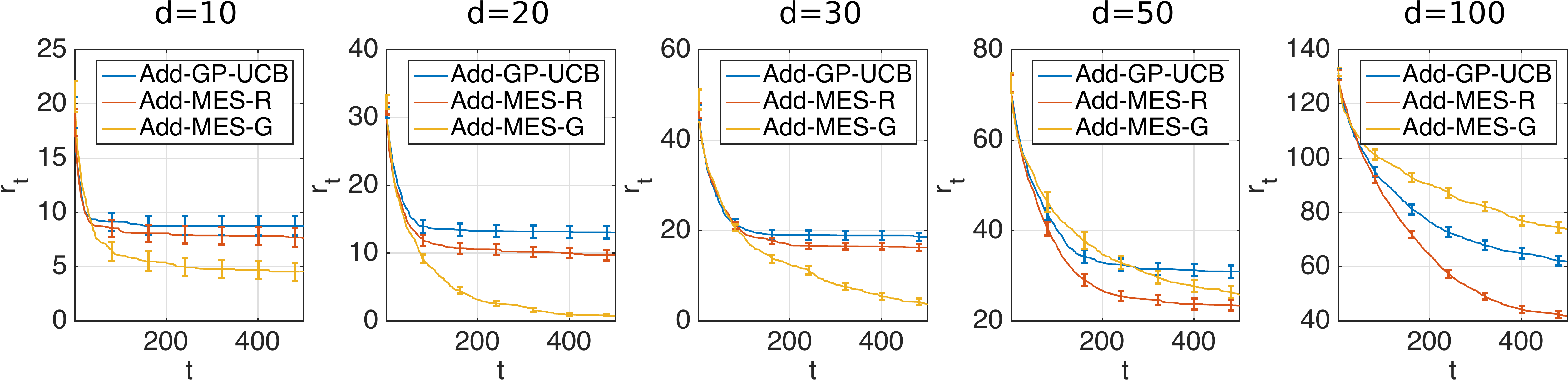}
\caption{Simple regrets for add-GP-UCB and add-MES methods on the synthetic add-GP functions. Both add-MES methods  outperform add-GP-UCB except for add-MES-G on the input dimension $d=100$. Add-MES-G achieves the lowest simple regret when $d$ is relatively low, while for higher $d$ add-MES-R becomes better than add-MES-G.}
\label{fig:highsynth}
\end{figure*}
\begin{figure}
\centering
\includegraphics[width=.45\textwidth]{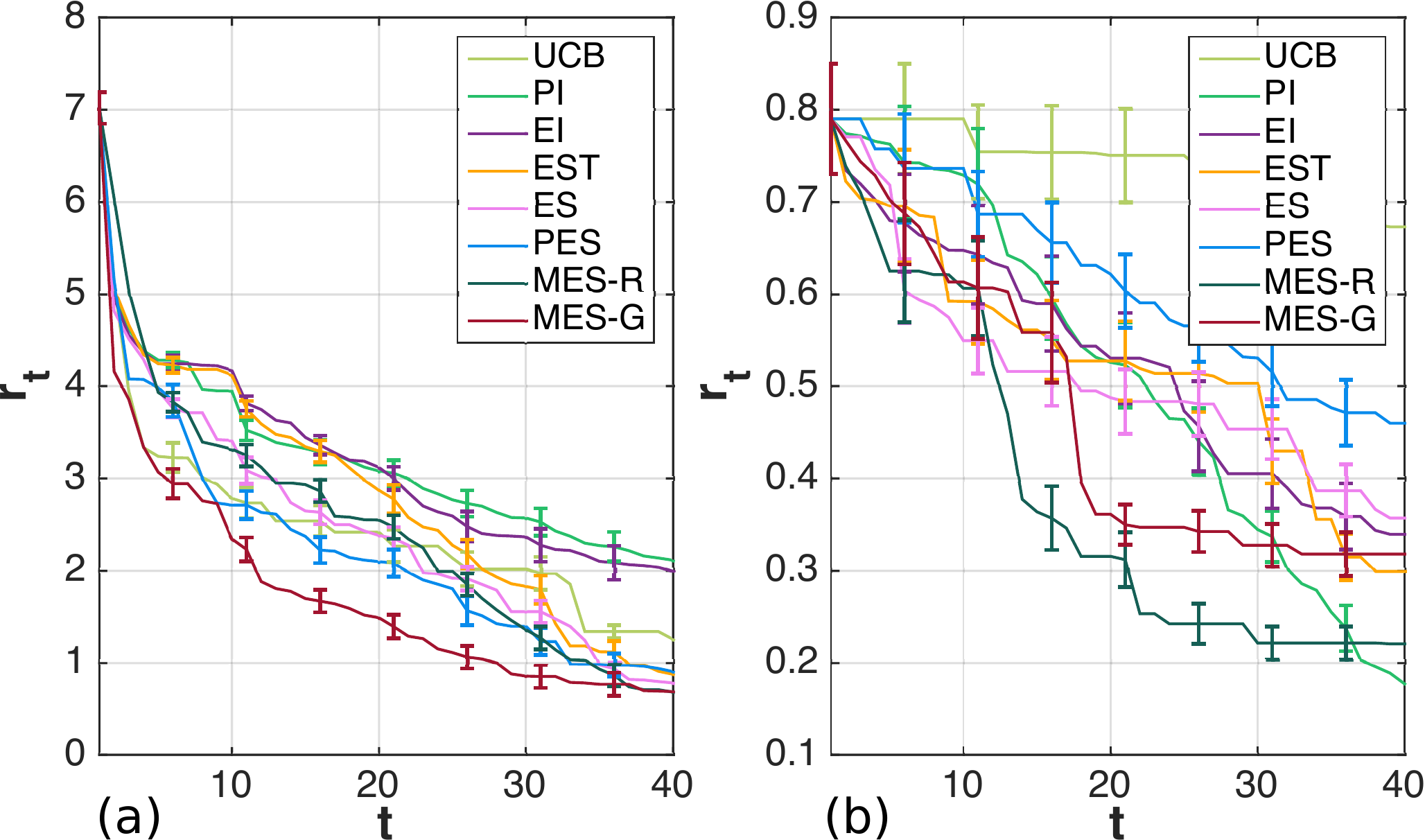}
\caption{BO for active data selection on two robot pushing tasks for minimizing the distance to a random goal with (a) 3-D actions and (b) 4-D actions. MES methods perform better than other methods on the 3-D function. For the 4-D function, MES methods converge faster to a good regret, while PI achieves lower regret in the very end.}
\label{fig:robot}
\end{figure}
We use BO to do active learning for the pre-image learning problem for pushing~\cite{kaelbling17icra}. The function we optimize takes as input the pushing action of the robot, and outputs the distance of the pushed object to the goal location. We use BO to minimize the function in order to find a good pre-image for pushing the object to the designated goal location. The first function we tested has a 3-dimensional input: robot location $(r_x,r_y)$ and pushing duration $t_r$. We initialize the observation size to be one, the same across all methods. The second function has a 4-dimensional input: robot location and angle $(r_x,r_y, r_{\theta})$, and pushing duration $t_r$. We initialize the observation to be 50 random points and set them the same for all the methods. We select 20 random goal locations for each function to test if BO can learn where to push for these locations. We show the simple regret in Fig.~\ref{fig:robot} and the inference regret in Table~\ref{tb:robot}. MES methods performed on a par with or better than their competitors.

\begin{table}[t]
\caption{Inference regret $R_T$ for action selection in robot pushing.}
\label{tb:robot}
\vskip -0.5in
\begin{center}
\begin{small}
\begin{sc}
\begin{tabular}{lll}%
\hline
\abovespace\belowspace
Method & 3-d action & 4-d action \\
\hline
\abovespace
UCB & $1.10 \pm 0.66$ & $0.56 \pm 0.44$  \\
PI & $2.03 \pm 1.77$ & {\color{red}$0.16 \pm 0.20$}  \\
EI & $1.89 \pm 1.87$ & $0.30 \pm 0.33$  \\
EST & $0.70 \pm 0.90$ & $0.24 \pm 0.17$  \\
ES &  {\color{red}$0.62 \pm 0.59$} & $0.25 \pm 0.20$  \\
PES & $0.81 \pm 1.27$ & $0.38 \pm 0.38$  \\
MES-R & {\color{red}$0.61 \pm 1.23$} & {\color{red}$0.16 \pm 0.10$}  \\
\belowspace
MES-G & {\color{red}$0.61 \pm 1.26$} & $0.24 \pm 0.25$  \\
\hline
\end{tabular}
\end{sc}
\end{small}
\end{center}
\vskip -0.2in
\end{table}

\subsection{High Dimensional BO with Add-MES}

In this section, we test our add-MES algorithm on high dimensional black-box function optimization problems. First we compare add-MES and add-GP-UCB~\cite{kandasamy2015high} on a set of synthetic additive functions with known additive structure and GP hyper-parameters. Each function component of the synthetic additive function is active on at most three input dimensions, and is sampled from a GP with zero mean and Gaussian kernel (bandwidth = $0.1$ and scale = $5$). For the parameter of add-GP-UCB, we follow~\cite{kandasamy2015high} and set $\beta^{(m)}_t = |A_m|\log 2t/5$. We set the number of $y^{(m)}_*$ sampled for each function component in add-MES-R and add-MES-G to be 1. 
We repeat each experiment for 50 times for each dimension setting. The results for simple regret are shown in Fig.~\ref{fig:highsynth}. Add-MES methods perform much better than add-GP-UCB in terms of simple regret. Interestingly, add-MES-G works better in lower dimensional cases where $d=10,20,30$, while add-MES-R outperforms both add-MES-G and add-GP-UCB for higher dimensions where $d=50,100$. In general, MES-G tends to overestimate the maximum of the
function because of the independence assumption, and MES-R tends to underestimate the maximum of the function because of the imperfect global optimization of the posterior function samples. We conjecture that MES-R is better for settings where exploitation is preferred over exploration (e.g., not too many local optima), and MES-G works better if exploration is preferred. 

To further verify the performance of add-MES in high dimensional problems, we test on two real-world high dimensional experiments. One is a function that returns the distance between a goal location and two objects being pushed by a robot which has 14 parameters\footnote{We implemented the function in~\cite{box2d}.}. The other function returns the walking speed of a planar bipedal robot, with 25 parameters to tune~\cite{westervelt2007feedback}. In Fig.~\ref{fig:highreal}, we show the simple regrets achieved by add-GP-UCB and add-MES. Add-MES methods performed competitively compared to add-GP-UCB on both tasks.

\begin{figure}[h]
\centering
\includegraphics[width=0.45\textwidth]{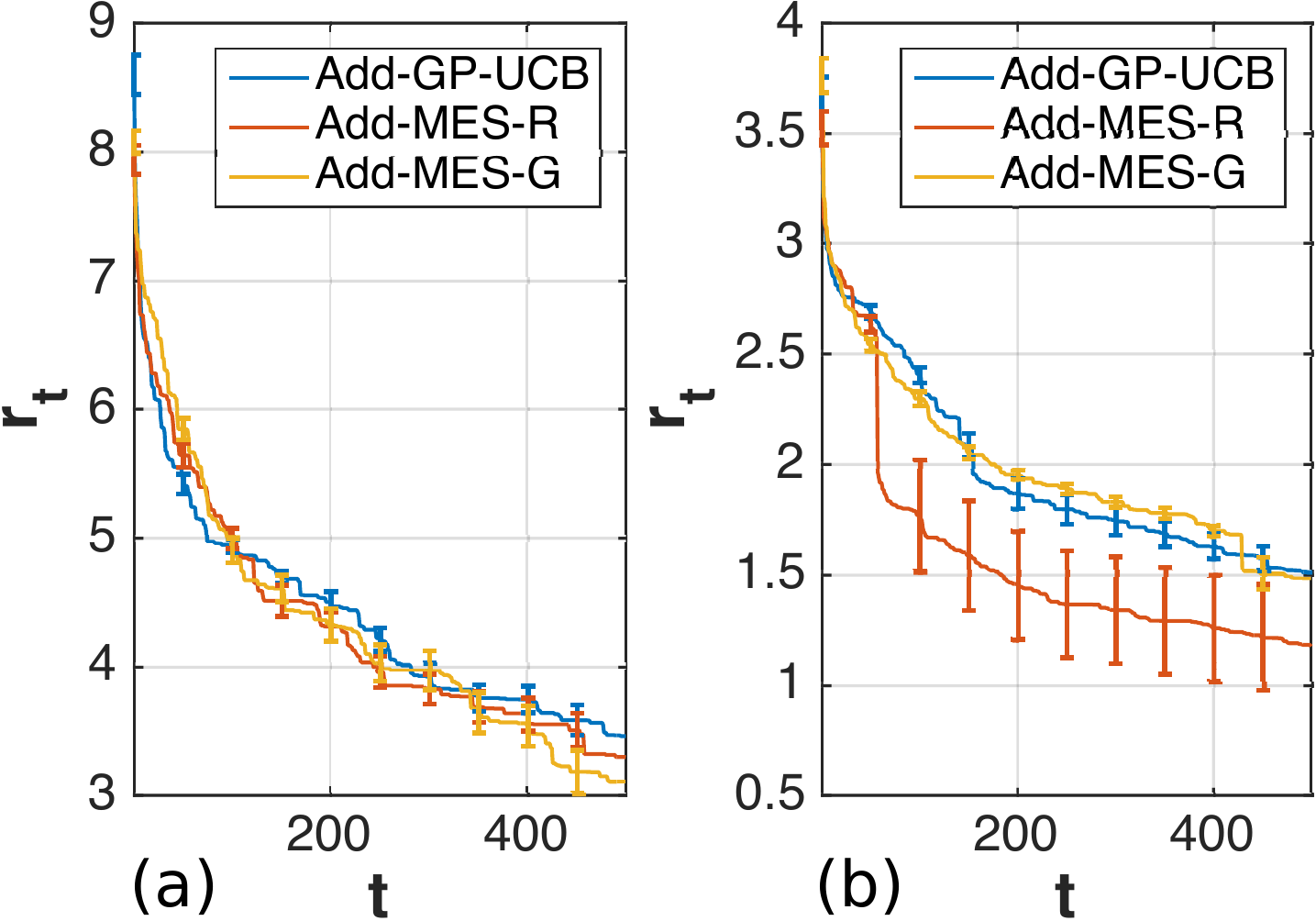}
\caption{Simple regrets for add-GP-UCB and add-MES methods on (a) a robot pushing task with 14 parameters and (b) a planar bipedal walker optimization task with 25 parameters. Both MES methods perform competitively comparing to add-GP-UCB.}
\label{fig:highreal}
\end{figure}

\section{Conclusion}
We proposed a new information-theoretic approach, max-value entropy search (MES), for optimizing expensive black-box functions. MES is competitive with or better than previous entropy search methods, but at a much lower computational cost. Via additive GPs, MES is adaptable to high-dimensional settings. We theoretically connected MES to other popular Bayesian optimization methods including entropy search, GP-UCB, PI, and EST, and showed a bound on the simple regret for a variant of MES. Empirically, MES performs well on a variety of tasks.

\section*{Acknowledgements}
We thank Prof.\ Leslie Pack Kaelbling and Prof.\ Tom\'as Lozano-P\'erez for discussions on active learning and Dr.\ William Huber for his solution to ``Extreme Value Theory - Show: Normal to Gumbel'' at \url{stats.stackexchange.com}, which leads to our Gumbel approximation in Section~\ref{spec:gumbel}. We gratefully acknowledge support from NSF CAREER award 1553284, NSF grants 1420927 and 1523767, from ONR grant N00014-14-1-0486, and from ARO grant W911NF1410433.  We thank MIT Supercloud and the Lincoln Laboratory Supercomputing Center for providing computational resources. Any opinions, findings, and conclusions or recommendations expressed in this material are those of the authors and do not necessarily reflect the views of our sponsors.

\begin{small}
\bibliography{refs}
\bibliographystyle{icml2017}
\end{small}

\appendix

\section{Related work}
Our work is largely inspired by the entropy search (ES) methods~\cite{hennig2012,hernandez2014predictive}, which established the information-theoretic view of Bayesian optimization by evaluating the inputs that are most informative to the $\argmax$ of the function we are optimizing. 

Our work is also closely related to probability of improvement (PI)~\cite{kushner1964}, expected improvement (EI)~\cite{mockus1974}, and  the BO algorithms using upper confidence bound to direct the search~\cite{auer2002b,kawaguchi2015bayesian,kawaguchi2016global}, such as  GP-UCB~\cite{srinivas2009gaussian}. In~\cite{wang2016est}, it was pointed out that GP-UCB and PI are closely related by exchanging the parameters. Indeed, all these algorithms build in the heuristic that the next evaluation point needs to be likely to achieve the maximum function value or have high probability of improving the current evaluations, which in turn, may also give more information on the function optima like how ES methods queries. These connections become clear as stated in Section 3.1 of our paper.

Finding these points that may have good values in high dimensional space is, however, very challenging. In the past, high dimensional BO algorithms were developed under various assumptions such as the existence of a lower dimensional function structure~\cite{djolonga2013high,wang2013}, or an additive function structure where each component is only active on a lower manifold of the space~\cite{li2016high,kandasamy2015high}. In this work, we show that our method also works well in high dimensions with the additive assumption made in~\cite{kandasamy2015high}.

\section{Using the Gumbel distribution to sample $y_*$}
To sample the function maximum $y_*$, our first approach is to approximate the distribution for $y^*$ and then sample from that distribution. We use independent Gaussians to approximate the correlated $f(\vx), \forall \vx\in \hat{\mathfrak X}$ where $\hat{\mathfrak X}$ is a discretization of the input search space $\mathfrak X$ (unless $\mathfrak X$ is discrete, in which case $\hat{\mathfrak X}=\mathfrak X$). A similar approach was adopted in~\cite{wang2016est}. We can show that by assuming $\{f(\vx)\}_{\vx\in \hat{\mathfrak X}}$, our approximated distribution gives a distribution for an upperbound on $f(\vx)$.

\begin{lem}[Slepian's Comparison Lemma~\citep{slepian1962one,massart2007concentration}]
\label{lem:splepian}
Let $\vu, \vv\in \R^n$ be two multivariate Gaussian random vectors with the same mean and variance, such that
 $$\mathbb E [\vv_i\vv_j]\leq \mathbb E [\vu_i\vu_j], \forall i,j.$$ 
 Then for every $y$
 $$\Pr[\sup_{i\in[1,n]}\vv_i \leq y] \leq \Pr[\sup_{i\in[1,n]}\vu_i \leq y].$$
\end{lem}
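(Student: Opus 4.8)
The plan is to prove the slightly more convenient statement that for arbitrary thresholds $\lambda_1,\dots,\lambda_n$ one has $\Pr[\vv_i\le\lambda_i,\forall i]\le\Pr[\vu_i\le\lambda_i,\forall i]$, and then specialize to $\lambda_i=y$; since $\{\sup_i\vw_i\le y\}=\{\vw_i\le y,\forall i\}$, this immediately yields the stated inequality for the suprema. After translating both vectors by their common mean I may assume they are centered, which leaves the equal-variance and ordered-covariance hypotheses intact. The core tool is the Gaussian interpolation method: realize $\vu$ and $\vv$ on a common space as \emph{independent} copies, and define the path $\vw(t)=\sqrt t\,\vu+\sqrt{1-t}\,\vv$ for $t\in[0,1]$, so that $\vw(0)=\vv$, $\vw(1)=\vu$, each $\vw(t)$ is a centered Gaussian vector, and its covariance is the linear interpolant $\Ex[\vw_i(t)\vw_j(t)]=t\,\Ex[\vu_i\vu_j]+(1-t)\,\Ex[\vv_i\vv_j]$.

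First I would replace the indicator of the event $\{\vx_i\le\lambda_i,\forall i\}$ by a smooth product $F(\vx)=\prod_{i=1}^n h_i(\vx_i)$, where each $h_i$ is a bounded, smooth, non-increasing approximation of $\mathbf{1}[\cdot\le\lambda_i]$, and study $\Phi(t)=\Ex[F(\vw(t))]$. Differentiating in $t$ and applying Gaussian integration by parts (Stein's identity) to the terms arising from $\tfrac{d}{dt}\vw(t)=\tfrac{\vu}{2\sqrt t}-\tfrac{\vv}{2\sqrt{1-t}}$, using that $\vu,\vv$ are independent so that $\Ex[\vu_i\vw_j(t)]=\sqrt t\,\Ex[\vu_i\vu_j]$ and $\Ex[\vv_i\vw_j(t)]=\sqrt{1-t}\,\Ex[\vv_i\vv_j]$, produces the interpolation formula
\[
\frac{d}{dt}\Phi(t)=\frac12\sum_{i,j}\big(\Ex[\vu_i\vu_j]-\Ex[\vv_i\vv_j]\big)\,\Ex\!\left[\partial_i\partial_j F(\vw(t))\right].
\]
The prefactors $\tfrac{1}{\sqrt t}$ and $\tfrac{1}{\sqrt{1-t}}$ cancel against the covariances, so the formula extends to the endpoints, and the diagonal terms $i=j$ drop out because $\vu$ and $\vv$ have equal variances, leaving only the off-diagonal contributions.

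The sign analysis then finishes the argument: for $i\ne j$ we have $\partial_i\partial_j F=h_i'(\vx_i)\,h_j'(\vx_j)\prod_{k\ne i,j}h_k(\vx_k)\ge 0$, since each $h_k\ge 0$ and each $h_k'\le 0$ forces the product of the two derivative factors to be nonnegative; meanwhile the hypothesis $\Ex[\vv_i\vv_j]\le\Ex[\vu_i\vu_j]$ makes every coefficient nonnegative. Hence $\tfrac{d}{dt}\Phi(t)\ge 0$, so $\Phi(0)\le\Phi(1)$, i.e.\ $\Ex[F(\vv)]\le\Ex[F(\vu)]$. Letting the smooth approximations $h_i$ decrease to the indicators (via monotone or dominated convergence) gives $\Pr[\vv_i\le\lambda_i,\forall i]\le\Pr[\vu_i\le\lambda_i,\forall i]$, and setting $\lambda_i=y$ completes the proof. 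I expect the main obstacle to be the rigorous justification of the interpolation formula: verifying that differentiation under the expectation is legitimate, that the Gaussian integration-by-parts identity applies to the bounded smooth $F$, and that the singular prefactors at $t=0,1$ are genuinely removable before the limit in the smoothing parameter is taken.
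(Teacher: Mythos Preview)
Your proof via Gaussian interpolation is correct and is in fact one of the standard modern proofs of Slepian's lemma (as presented, for instance, in Massart's concentration notes or in Ledoux--Talagrand). The steps---centering, constructing the path $\vw(t)=\sqrt{t}\,\vu+\sqrt{1-t}\,\vv$, differentiating $\Phi(t)=\Ex[F(\vw(t))]$, applying Stein's identity to obtain the second-derivative formula, observing that the diagonal terms vanish under the equal-variance assumption, and checking the sign of the off-diagonal terms---are all valid, and your closing remarks about the analytic care needed at the endpoints and in the smoothing limit are accurate.

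However, there is nothing to compare against: the paper does \emph{not} prove Lemma~\ref{lem:splepian}. It is stated in the appendix purely as a cited classical result (with references to \citet{slepian1962one} and \citet{massart2007concentration}) and is then invoked to justify that the independence approximation in the Gumbel sampling step yields a stochastic upper bound on $y_*$ when the posterior covariances are nonnegative. So your proposal supplies a full proof where the paper intentionally gives none; it is correct, but there is no ``paper's own proof'' to match or contrast with.
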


By the Slepian's lemma, if the covariance $k_t(\vx, \vx') \geq 0, \forall \vx,\vx'\in  \hat{\mathfrak X}$, using the independent assumption with give us a distribution on the upperbound $\hat{y}_*$ of $f(\vx)$, $\Pr[\hat{y}_* < y] = \prod_{\vx\in\hat{\mathfrak X}} \Psi(\gamma_y(\vx)))$.

We then use the  Gumbel distribution to approximate the distribution for the maximum of the function values for $\hat{\mathfrak X}$, $\Pr[\hat{y}_* < y] = \prod_{\vx\in\hat{\mathfrak X}} \Psi(\gamma_y(\vx)))$. If for all $\vx\in\hat{\mathfrak X}$, $f(\vx)$ have the same mean and variance, the Gumbel approximation is in fact asymptotically correct by the Fisher-Tippett-Gnedenko theorem~\cite{fisher1930genetical}. 
\begin{thm}[The Fisher-Tippett-Gnedenko Theorem~\cite{fisher1930genetical}]
Let $\{v_i\}_{i=1}^\infty$ be a sequence of independent and identically-distributed random variables, and $M_n = \max_{1\leq i\leq n}v_i$. If there exist constants $a_n>0,b_n \in \R$ and a non degenerate distribution function $F$ such that $\lim_{n\rightarrow \infty} \Pr(\frac{M_n-b_n}{a_n}\leq x) = F(x)$, then the limit distribution $F$ belongs to either the Gumbel, the Fr{\'e}chet or the Weibull family.
\end{thm}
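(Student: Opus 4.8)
The plan is to follow the classical route through \emph{max-stability}: first show that any non-degenerate limit law $F$ must be stable under the operation of taking maxima, and then classify all such laws by solving a functional equation. Throughout, write $F_v$ for the common CDF of the $v_i$, so that $\Pr(M_n \le x) = F_v(x)^n$ and the hypothesis reads $F_v(a_n x + b_n)^n \to F(x)$ at every continuity point of $F$.

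First I would establish the \textbf{Convergence to Types Lemma} (Khinchin): if $G_n(a_n x + b_n) \to G(x)$ and $G_n(\alpha_n x + \beta_n) \to H(x)$ with $G, H$ non-degenerate, then necessarily $\alpha_n / a_n \to A > 0$ and $(\beta_n - b_n)/a_n \to B$, whence $H(x) = G(Ax + B)$. This is a standard real-analysis argument comparing the behavior at two distinct quantiles, and I would either cite it or include it as a short lemma; the non-degeneracy of the limits is what guarantees the constants $A, B$ are finite with $A>0$.

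Next, for each fixed integer $k \ge 1$ I would apply the hypothesis along the subsequence $nk$. On the one hand $F_v(a_{nk} x + b_{nk})^{nk} \to F(x)$. On the other hand $F_v(a_n x + b_n)^{nk} = \big(F_v(a_n x + b_n)^n\big)^k \to F(x)^k$. Both sequences have the form $(\text{CDF})^{nk}$ with non-degenerate limits, so the Convergence to Types Lemma forces the existence of constants $\alpha_k > 0$, $\beta_k \in \R$ with $F(x)^k = F(\alpha_k x + \beta_k)$ for every $k$. This is precisely the assertion that $F$ is \emph{max-stable}. I would then solve this equation: composing the relations for $k$ and $\ell$ gives $\alpha_{k\ell} = \alpha_k \alpha_\ell$ together with a compatible relation for $\beta$, and extending $k$ to a continuous parameter $s>0$ while invoking the monotonicity of $F$ to resolve the resulting Cauchy-type equation yields $\alpha(s) = s^{\rho}$ for some $\rho \in \R$. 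Substituting back and integrating the equation for $\psi(x) = -\log F(x)$ produces exactly three cases: $\rho = 0$ gives the Gumbel law $\exp(-e^{-x})$, while $\rho > 0$ and $\rho < 0$ give the Fréchet and Weibull families respectively, after absorbing a location–scale normalization.

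I expect the main obstacle to be this final functional-equation step. One must justify extending the integer relation to real $s$, exclude pathological non-measurable solutions of the Cauchy equation using the monotonicity (hence measurability) of $F$, and carefully track the location and scale parameters so that the three canonical forms emerge cleanly rather than as an amorphous family. The Convergence to Types Lemma, though elementary, also requires care to ensure the limiting constants are genuinely finite and positive, which is exactly where the non-degeneracy hypothesis on $F$ is indispensable.
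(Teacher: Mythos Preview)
Your proposal outlines the classical textbook proof of the Fisher--Tippett--Gnedenko theorem via max-stability and the Convergence to Types Lemma, and the sketch is correct in its essentials. However, the paper does not give any proof of this theorem: it is simply stated in the appendix with a citation to the original source, used solely to motivate the Gumbel approximation for sampling $y_*$. There is therefore nothing in the paper to compare your argument against. Your route---Khinchin's lemma, deriving $F(x)^k = F(\alpha_k x + \beta_k)$, then solving the multiplicative Cauchy equation for $\alpha$---is exactly the standard one found in extreme-value references such as Leadbetter--Lindgren--Rootz\'en or de Haan--Ferreira, and the obstacles you flag (extension from integer $k$ to real $s$, use of monotonicity to rule out pathological solutions) are the right ones to worry about.
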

In particular, for i.i.d. Gaussians, the limit distribution of the maximum of them belongs to the Gumbel distribution~\cite{von1936distribution}.
 Though the Fisher-Tippett-Gnedenko theorem does not hold for independent and differently distributed Gaussians, in practice we still find it useful in approximating $\Pr[\hat{y}_* < y]$. In Figure~\ref{fig:gumbel}, we show an example of the result of the approximation for the distribution of the maximum of $f(\vx)\sim GP(\mu_t,k_t) \forall \vx\in\hat{\mathfrak X}$ given 50 observed data points randomly selected from a function sample from a GP with 0 mean and Gaussian kernel. 

\begin{figure}[h]
\centering
\includegraphics[width=.4\textwidth]{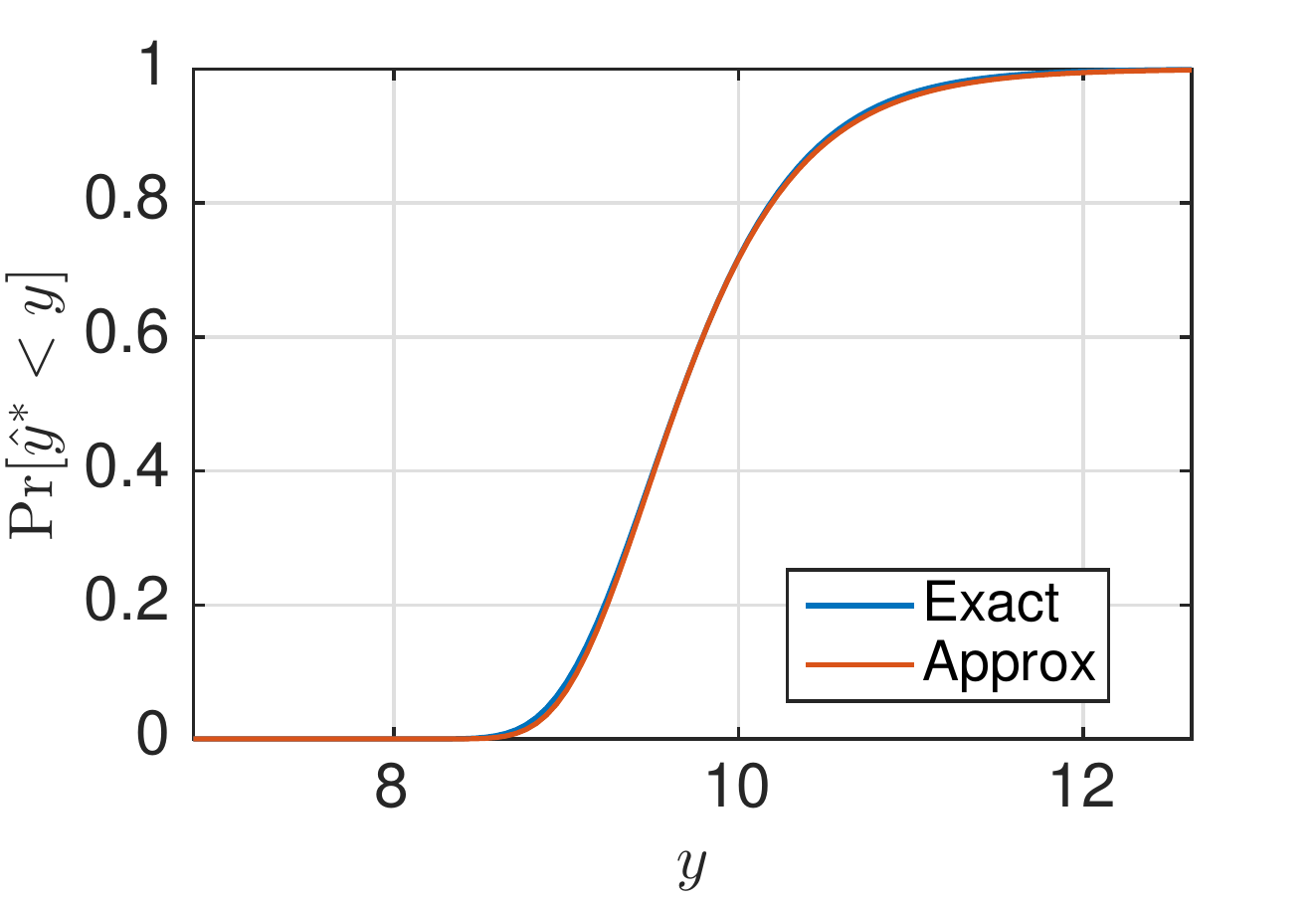}
\caption{An example of approximating the cumulative probability of the maximum of independent differently distributed Gaussians $\Pr[\hat{y}_* < y]$ (Exact) with a Gumbel distribution $\mathcal G(a,b)$ (Approx) via percentile matching. }
\label{fig:gumbel}
\end{figure}

\hide{
\textbf{Speeding up posterior inference. } The random features not only enable easy sampling of a function from the posterior, but also speed up the process of updating the GP when a new observation is added: $\mSigma_t$ can be updated in $O(D^2)$ by the Sherman-Morrison formula: %
$\mSigma_{t+1} = \mSigma_{t} - \frac{\mSigma_{t} z_{t+1} z_{t+1}\T \mSigma_{t}\sigma^{-2}}{1+ z_{t+1}\T \mSigma_{t} z_{t+1} \sigma^{-2}}$, which is cheaper than  
the inversion of a $t\times t$ matrix for every iteration $t$.
Similarly, to update $k_t(\cdot, \cdot)$, we get 
$
k_{t+1}(\vx, \vx') = k_t(\vx, \vx') - \frac{k_t(\vx, \vx_{t+1}) k_t(\vx_{t+1}, \vx')}{\sigma^2 +k_t(\vx_{t+1}, \vx_{t+1})}.
$
}
\section{Regret bounds}
Based on the connection of MES to EST, we show the bound on the learning regret for MES with a point estimate for $\alpha(x)$. 
\regret*

\begin{figure*}
\centering
\includegraphics[width=0.85\textwidth]{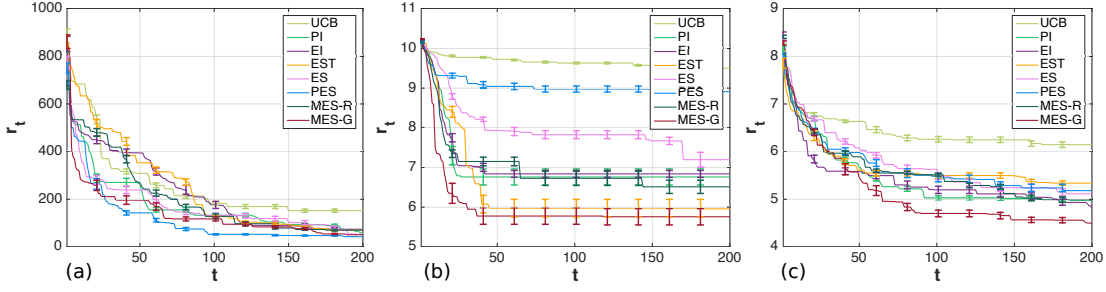}
\caption{(a) 2-D eggholder function; (b) 10-D Shekel function; (c) 10-D Michalewicz function. PES achieves lower regret on the 2-d function while MES-G performed better than other methods on the two 10-d optimization test functions.}
\label{fig:test}
\end{figure*}

Before we continue to the proof, notice that if the function upper bound $\hat y_*$ is sampled using the approach described in Section 3.1 and $k_t(\vx, \vx') \geq 0, \forall \vx,\vx'\in  \hat{\mathfrak X}$, we may still get the regret guarantee by setting $y_*=\hat{y}_*$ (or $y_* = \hat{y}_* +\epsilon L$ if $\mathfrak X$ is continuous) since $\Pr[\max_{\hat{\mathfrak X}} \leq y] \geq \Pr[\hat{y}_* < y]$. Moreover, Theorem~\ref{thm:regret} assumes $y_*$ is sampled from a universal maximum distribution of functions from $GP(\mu,k)$, but it is not hard to see that if we have a distribution of maximums adapted from $GP(\mu_t, k_t)$, we can still get the same regret bound by setting $T' = \sum_{i=1}^T\log_{w_i} \frac{\delta}{2\pi_i}$, where $w_i = F_i(f_*)$ and $F_i$ corresponds to the maximum distribution at an iteration where $y_*>f_*$. Next we introduce a few lemmas and then prove Theorem~\ref{thm:regret}.

\begin{lem}[Lemma 3.2 in \cite{wang2016est}]
\label{lem:pbound}
Pick $\delta\in(0,1)$ and set $\zeta_{t} = (2\log(\frac{\pi_t}{2\delta}))^\frac12$, where $\sum_{t=1}^T \pi_t^{-1} \leq 1$, $\pi_t > 0$. Then, it holds that
$ \Pr [  \mu_{t-1}(\vx_{t}) -f( \vx_{t})  \leq \zeta_{t}\sigma_{t-1}(\vx_{t}), \forall t\in [1,T]] \geq 1-\delta$.
\end{lem}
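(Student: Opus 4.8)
The plan is to prove the bound via a pointwise Gaussian tail inequality combined with a union bound over iterations, with the one subtlety being that the query point $\vx_t$ is adaptively chosen from the history. The key structural fact I would exploit is that $\vx_t = \argmax_{\vx} \alpha_{t-1}(\vx)$ is a deterministic function of $D_{t-1}$, so it is $D_{t-1}$-measurable.

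First I would fix an iteration $t$ and condition on the observed data $D_{t-1}$. Since $\vx_t$ is determined by $D_{t-1}$, conditioning renders it a fixed point, and the GP posterior from Section~\ref{ssec:gp} tells us that the conditional law of $f(\vx_t)$ given $D_{t-1}$ is exactly $\mathcal N(\mu_{t-1}(\vx_t), \sigma_{t-1}^2(\vx_t))$. Hence the standardized variable $Z = (f(\vx_t) - \mu_{t-1}(\vx_t))/\sigma_{t-1}(\vx_t)$ is a standard normal under this conditioning. The event $\mu_{t-1}(\vx_t) - f(\vx_t) > \zeta_t \sigma_{t-1}(\vx_t)$ is precisely $\{Z < -\zeta_t\}$, which by symmetry equals $\Pr[Z > \zeta_t]$. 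Applying the Gaussian tail inequality $\Pr[Z > c] \le \tfrac12 e^{-c^2/2}$ with the chosen $\zeta_t = (2\log(\tfrac{\pi_t}{2\delta}))^{1/2}$, for which $e^{-\zeta_t^2/2} = \tfrac{2\delta}{\pi_t}$, gives a conditional failure probability at most $\delta/\pi_t$.

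Because this per-step bound is uniform over the conditioning value of $D_{t-1}$, I would take the tower expectation to remove the conditioning and obtain the unconditional bound $\Pr[\mu_{t-1}(\vx_t) - f(\vx_t) > \zeta_t \sigma_{t-1}(\vx_t)] \le \delta/\pi_t$ for each $t$. A union bound over $t \in [1,T]$, together with the assumption $\sum_{t=1}^T \pi_t^{-1} \le 1$, then shows the probability of any violation is at most $\sum_{t=1}^T \delta/\pi_t \le \delta$, and complementing yields the stated guarantee.

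The main obstacle is the adaptivity of $\vx_t$: it is a random point depending on the history, so a naive application of the Gaussian tail bound to a ``fixed'' point is not justified. The resolution, and the only step requiring genuine care, is to condition on $D_{t-1}$ before invoking the tail bound, using $D_{t-1}$-measurability of $\vx_t$ and the fact that the resulting per-step bound is uniform in the conditioning so it survives the tower expectation. The remaining arithmetic with $\zeta_t$ and the union bound is routine.
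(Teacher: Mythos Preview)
Your argument is correct and is exactly the standard Gaussian-tail-plus-union-bound proof that this lemma admits: condition on $D_{t-1}$ so that $\vx_t$ becomes fixed, apply $\Pr[Z>c]\le \tfrac12 e^{-c^2/2}$ to the standardized posterior, integrate out the conditioning, and union-bound over $t$ using $\sum_t \pi_t^{-1}\le 1$.

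Note, however, that the present paper does not actually supply a proof of this lemma: it is merely quoted as Lemma~3.2 of \cite{wang2016est} and used as a black box in the proof of Theorem~\ref{thm:regret}. So there is no ``paper's own proof'' to compare against here. For context, the argument you give is essentially the one underlying the analogous concentration lemma in \cite{srinivas2009gaussian} (their Lemma~5.1/5.5), of which the EST version is a direct adaptation; your treatment of the adaptivity via $D_{t-1}$-measurability of $\vx_t$ is the right way to make that step rigorous.
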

\begin{lem}[Lemma 3.3 in \cite{wang2016est}]
\label{regretlem}
If $ \mu_{t-1}(\vx_{t}) -f(\vx_{t}) \leq \zeta_t\sigma_{t-1}(\vx_{t})$, the regret at time step $t$ is upper bounded as $\rt_t \leq (\bt_t  +\zeta_t )\sigma_{t-1}(\vx_{t})$ %
, where $\bt_t  \triangleq \min_{\vx\in\mathfrak X} \frac{\hat m_t-\mu_{t-1}(\vx)}{\sigma_{t-1}(\vx)}$, and $\hat m_t\geq \max_{\vx\in \mathfrak X} f(\vx)$, $\forall t\in[1,T]$.
\end{lem}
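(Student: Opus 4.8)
The plan is to bound the instantaneous regret $\rt_t = f_* - f(\vx_t)$, where $f_* = \max_{\vx\in\mathfrak X} f(\vx)$, directly by splitting it into a ``posterior-mean gap'' term and a ``concentration'' term. I would write
\[
\rt_t = f_* - f(\vx_t) = \big(f_* - \mu_{t-1}(\vx_t)\big) + \big(\mu_{t-1}(\vx_t) - f(\vx_t)\big),
\]
and treat the two summands separately. The second summand is immediately controlled by the hypothesis of the lemma: by assumption, $\mu_{t-1}(\vx_t) - f(\vx_t) \leq \zeta_t\sigma_{t-1}(\vx_t)$.

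For the first summand I would invoke the two defining properties of the EST/MES selection rule. First, since $\hat m_t \geq \max_{\vx\in\mathfrak X} f(\vx) = f_*$, I can replace $f_*$ by the (larger) estimate and write $f_* - \mu_{t-1}(\vx_t) \leq \hat m_t - \mu_{t-1}(\vx_t)$. Second, $\vx_t$ is chosen as the minimizer of $\gamma(\vx) = \frac{\hat m_t - \mu_{t-1}(\vx)}{\sigma_{t-1}(\vx)}$, so the minimum value $\bt_t = \min_{\vx\in\mathfrak X}\frac{\hat m_t - \mu_{t-1}(\vx)}{\sigma_{t-1}(\vx)}$ is attained at $\vx_t$; that is, $\frac{\hat m_t - \mu_{t-1}(\vx_t)}{\sigma_{t-1}(\vx_t)} = \bt_t$, and hence $\hat m_t - \mu_{t-1}(\vx_t) = \bt_t\,\sigma_{t-1}(\vx_t)$. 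Combining the two gives $f_* - \mu_{t-1}(\vx_t) \leq \bt_t\,\sigma_{t-1}(\vx_t)$.

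Adding the bounds on the two summands then yields $\rt_t \leq (\bt_t + \zeta_t)\,\sigma_{t-1}(\vx_t)$, which is exactly the claim, so the argument is complete.

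The main subtlety -- rather than a genuine obstacle -- is the step that converts the definition of $\bt_t$ as a minimum into an \emph{equality} at the queried point $\vx_t$: the reverse inequality $\frac{\hat m_t - \mu_{t-1}(\vx_t)}{\sigma_{t-1}(\vx_t)} \geq \bt_t$ holds trivially from the definition of the minimum, so the argument relies crucially on $\vx_t$ being the $\argmin$ of $\gamma$, which is precisely the EST/MES query rule (shown equivalent to the single-sample MES criterion in Lemma \ref{lem:equivalence}). I would also make explicit that $\sigma_{t-1}(\vx_t) > 0$ so the division is well defined, and remark that only the over-estimation property $\hat m_t \geq f_*$ of the estimate is used here -- no matching lower bound on $\hat m_t$ enters this per-step inequality (its quality only affects the magnitude of $\bt_t$, and hence shows up later when the bound is aggregated over $t$).
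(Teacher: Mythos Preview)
The paper does not actually prove this lemma --- it is quoted verbatim as Lemma~3.3 of \cite{wang2016est} and invoked as a black box in the proof of Theorem~\ref{thm:regret}. Your argument is correct and is precisely the standard one: decompose $\rt_t = (f_* - \mu_{t-1}(\vx_t)) + (\mu_{t-1}(\vx_t) - f(\vx_t))$, control the second summand with the concentration hypothesis, and control the first via $f_* \leq \hat m_t$ together with the fact that the EST/MES query $\vx_t$ attains the minimum defining $\bt_t$, so that $\hat m_t - \mu_{t-1}(\vx_t) = \bt_t\,\sigma_{t-1}(\vx_t)$. Your remark that the equality (not just inequality) at $\vx_t$ is what drives the bound, and that it hinges on $\vx_t$ being the $\argmin$ of $\gamma$ (guaranteed here by Lemma~\ref{lem:equivalence}), is exactly the point; this is left implicit in the lemma statement because it is inherited from the EST setting.
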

\begin{lem}[Lemma 5.3 in \cite{srinivas2009gaussian}]
\label{lem:info}
The information gain for the points selected can be expressed in terms of the predictive variances. If $f_T = (f(\vx_t))\in \R^T$:
$$I(\vy_T; \vf_T) = \frac12 \sum_{t=1}^T \log(1+\sigma^{-2}\sigma^2_{t-1}(\vx_t)).$$
\end{lem}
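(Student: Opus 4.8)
The plan is to establish the identity through the standard entropy decomposition $I(\vy_T;\vf_T) = H(\vy_T) - H(\vy_T\mid\vf_T)$ and then to evaluate each term using the Gaussianity of the GP. The conditional term is immediate: writing $\vf_T = (f(\vx_t))_{t=1}^T$, the observations satisfy $\vy_T = \vf_T + \vct{\epsilon}$ with $\vct{\epsilon}\sim\mathcal N(\zeros,\sigma^2\mI)$ independent of $\vf_T$, so $\vy_T\mid\vf_T\sim\mathcal N(\vf_T,\sigma^2\mI)$. Since the differential entropy of a $T$-dimensional Gaussian depends only on its covariance, this gives $H(\vy_T\mid\vf_T) = \frac{T}{2}\log(2\pi e\sigma^2)$, independent of the realized $\vf_T$.

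For $H(\vy_T)$ I would invoke the chain rule for differential entropy, $H(\vy_T) = \sum_{t=1}^T H(y_t\mid y_1,\dots,y_{t-1})$, and identify each conditional entropy with the one-step GP predictive distribution. The crucial structural fact I would use is that for a GP the posterior variance depends only on the query locations and not on the observed responses: given $D_{t-1}$, the predictive law of $y_t = f(\vx_t)+\epsilon_t$ is exactly $\mathcal N(\mu_{t-1}(\vx_t),\,\sigma^2+\sigma^2_{t-1}(\vx_t))$, whence $H(y_t\mid y_{1:t-1}) = \frac12\log(2\pi e(\sigma^2+\sigma^2_{t-1}(\vx_t)))$. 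Because $\sigma^2_{t-1}(\vx_t)$ is a deterministic function of the locations alone, this identity holds verbatim even when the $\vx_t$ are selected adaptively from past data, which is precisely the regime relevant to the regret analysis.

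Summing over $t$ yields $H(\vy_T) = \frac12\sum_{t=1}^T \log(2\pi e(\sigma^2+\sigma^2_{t-1}(\vx_t)))$, and subtracting $H(\vy_T\mid\vf_T) = \frac12\sum_{t=1}^T\log(2\pi e\sigma^2)$ cancels the constant $2\pi e\sigma^2$ inside each logarithm term by term, leaving $\frac12\sum_{t=1}^T\log(1+\sigma^{-2}\sigma^2_{t-1}(\vx_t))$, as claimed.

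There is essentially no hard step here; the argument is a routine specialization of the Gaussian mutual-information formula. The only point requiring genuine care — and the one I would highlight explicitly — is the data-independence of the GP posterior variance, since it is what licenses the clean telescoping of the chain rule and what makes the statement an exact identity rather than merely an expectation when the design points are chosen adaptively. As a cross-check I would note the fully self-contained alternative that bypasses the chain rule: evaluate $I(\vy_T;\vf_T) = \frac12\log\det(\mI + \sigma^{-2}\mK_T)$ directly from the joint Gaussian covariance $\mK_T$ and factor the determinant via its Cholesky decomposition, whose successive pivots are exactly the $\sigma^2+\sigma^2_{t-1}(\vx_t)$. I would mention this as a consistency check but prefer the chain-rule route for brevity.
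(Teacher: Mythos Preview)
Your proposal is correct. The paper does not supply its own proof of this lemma; it is simply quoted as Lemma~5.3 of \cite{srinivas2009gaussian}. Your argument---chain rule $H(\vy_T)=\sum_t H(y_t\mid y_{1:t-1})$, identification of each conditional with the one-step predictive $\mathcal N(\mu_{t-1}(\vx_t),\sigma^2+\sigma^2_{t-1}(\vx_t))$ using the data-independence of the GP posterior variance, and subtraction of $H(\vy_T\mid\vf_T)=\tfrac{T}{2}\log(2\pi e\sigma^2)$---is exactly the proof given in that reference, so there is nothing to contrast.
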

\begin{proof}(Theorem~\ref{thm:regret})
By lemma~3.1 in our paper, we know that the theoretical results from EST~\cite{wang2016est} can be adapted to MES if $y_* \geq f_*$. The key question is when a sampled $y_*$ that can satisfy this condition. Because the cumulative density $w = F(f_*) \in (0,1)$ and $y^t_*$ are  independent samples from $F$, there exists at least one $y^t_*$ that satisfies $y^t_* > f_*$ with probability at least $1-w^{k_i}$ in $k_i$ iterations. 

Let $T' = \sum_{i=1}^T k_i$ be the total number of iterations. We split these iterations to $T$ parts where each part have $k_i$ iterations, $i = 1,\cdots, T$. By union bound, with probability at least $1-\sum_{i=1}^T w^{k_i}$, in all the $T$ parts of iterations, we have at least one iteration $t_i$ which samples $y^{t_i}_*$ satisfying $y^{t_i}_*>f_*, \forall i = 1,\cdots, T$. 

Let $\sum_{i=1}^T w^{k_i} = \frac{\delta}{2}$, we can set $k_i = \log_w \frac{\delta}{2\pi_i}$ for any $\sum_{i=1}^T(\pi_i)^{-1} = 1$. A convenient choice for $\pi_i$ is $\pi_i= \frac{\pi^2 i^2}{6}$. Hence with probability at least $1-\frac{\delta}{2}$, there exist a sampled $y^{t_i}_*$ satisfying $y^{t_i}_*>f_*, \forall i = 1,\cdots, T$.

Now let $\zeta_{t_i} = (2\log\frac{\pi_{t_i}}{\delta})^{\frac12}$. By Lemma~\ref{lem:pbound} and Lemma~\ref{regretlem}, the immediate regret $r_{t_i} = f_* - f(\vx_{t_i})$ can be bounded as
\begin{align*}
r_{t_i} &\leq (\bt_{t_i}  +\zeta_{t_i} )\sigma_{t_i-1}(\vx_{t_i}).
\end{align*}
Note that by assumption $0\leq\sigma_{t_i-1}^2(\vx_{t_i}) \leq 1$, so we have $\sigma_{t_i-1}^2 \leq \frac{\log(1+\sigma^{-2} \sigma_{t_i-1}^2(\vx_{t_i}))}{\log(1+\sigma^{-2})}$.
Then by Lemma~\ref{lem:info}, we have $\sum_{i=1}^T \sigma_{t_i-1}^2(\vx_{t_i}) \leq \frac{2}{\log(1+\sigma^{-2})} I(\vy_T;\vf_T)$ where $\vf_T = (f(\vx_{t_i}))_{i=1}^T \in \R^T, \vy_T = (y_{t_i})_{i=1}^T \in \R^T$. From assumptions, we have $I(\vy_T;\vf_T)\leq\rho_T $. By Cauchy-Schwarz inequality, $\sum_{i=1}^T \sigma_{t_i-1}(\vx_{t_i})\leq \sqrt{T\sum_{i=1}^T \sigma^2_{t_i-1}(\vx_{t_i})} \leq \sqrt{\frac{2T \rho_T}{\log(1+\sigma^{-2})}}$. It follows that with probability at least $1-\delta$,
$$\sum_{i=1}^T r_{t_i} \leq  (\bt_{t^*}  +\zeta_{T} ) \sqrt{\frac{2T \rho_T}{\log(1+\sigma^{-2})}}.$$
As a result, our learning regret is bounded as
$$r_{T'}\leq\frac1T\sum_{i=1}^T r_{t_i} \leq  (\bt_{t^*}  +\zeta_{T} ) \sqrt{\frac{2 \rho_T}{T\log(1+\sigma^{-2})}},$$ where $T' = \sum_{i=1}^T k_i = \sum_{i=1}^T \log_{w}\frac{\delta}{2\pi_i}$ is the total number of iterations.
\end{proof}
At first sight, it might seem like MES with a point estimate does not have a converging rate as good as $EST$ or $GP-UCB$. However, notice that $\min_{\vx\in\mathfrak X}\gamma_{y_1}(\vx) < \min{\vx\in\mathfrak X} \gamma_{y_2}(\vx)$ if $y_1 < y_2$, which decides the rate of convergence in Eq.~\ref{eq:regret}. So if we use $y_*$ that is too large, the regret bound could be worse. If we use $y_*$ that is smaller than $f_*$, however, its value won't count towards the learning regret in our proof, so it is also bad for the regret upper bound. With no principled way of setting $y_*$ since $f_*$ is unknown. Our regret bound in Theorem~\ref{thm:regret} is a randomized trade-off between sampling large and small $y_*$.

For the regret bound in add-GP-MES, it should follow add-GP-UCB. However, because of some technical problems in the proofs of the regret bound for add-GP-UCB, we haven't been able to show a regret bound for add-GP-MES either. Nevertheless, from the experiments on high dimensional functions, the methods worked well in practice.
\section{Experiments}

In this section, we provide more details on our experiments.

\paragraph{Optimization test functions}
In Fig.~\ref{fig:test}, we show the simple regret comparing BO methods on the three challenging optimization test functions: the 2-D eggholder function, the 10-D Shekel function, and the 10-D Michalewicz function.

\paragraph{Choosing the additive decomposition} 
We follow the approach in~\cite{kandasamy2015high}, and sample
10000 random decompositions (at most 2 dimensions in each
group) and pick the one with the best data likelihood based on 500
data points uniformly randomly sampled from the search space.
The decomposition setting was fixed for all the 500 iterations of
BO for a fair comparison.

\end{document}